\newcommand{\commentout}[1]{}
\newcommand{\junk}[1]{}
\newtheorem{theorem}{Theorem}
\newtheorem{lemma}{Lemma}
\newtheorem{assumption}{Assumption}
\newcommand{\cE}{\mathcal{E}}
\newcommand{\ccE}{\overline{\cE}}
\newcommand{\cH}{\mathcal{H}}
\newcommand{\eps}{\varepsilon}
\newcommand{\abs}[1]{\left|#1\right|}
\newcommand{\condEE}[2]{\mathbb{E} \left[#1 \,\middle|\, #2\right]}
\newcommand{\EE}[1]{\mathbb{E} \left[#1\right]}
\newcommand{\EEt}[1]{\mathbb{E}_t \left[#1\right]}
\newcommand{\I}[1]{\mathds{1} \! \left\{#1\right\}}
\newcommand{\kl}[2]{D_\mathrm{KL}(#1 \,\|\, #2)}
\newcommand{\rnd}[1]{\mathbf{#1}}
\newcommand{\set}[1]{\left\{#1\right\}}
\newcommand{\ud}{\, \mathrm{d}}
\DeclareMathOperator*{\argmax}{arg\,max\,}
\DeclareMathOperator*{\argmin}{arg\,min\,}
\mathchardef\mhyphen="2D
\newcommand{\cascadeucb}{{\tt CascadeUCB1}}
\newcommand{\cascadeklucb}{{\tt CascadeKL\mhyphen UCB}}
\newcommand{\combucb}{{\tt CombUCB1}}
\newcommand{\klucb}{{\tt KL\mhyphen UCB}}
\newcommand{\rankedklucb}{{\tt RankedKL\mhyphen UCB}}
\newcommand{\ucb}{{\tt UCB1}}
\begin{document}

\icmltitlerunning{Cascading Bandits: Learning to Rank in the Cascade Model}

\twocolumn[
\icmltitle{Cascading Bandits: Learning to Rank in the Cascade Model}
\icmlauthor{Branislav Kveton}{kveton@adobe.com}
\icmladdress{Adobe Research, San Jose, CA}
\icmlauthor{Csaba Szepesv\'ari}{szepesva@cs.ualberta.ca}
\icmladdress{Department of Computing Science, University of Alberta}
\icmlauthor{Zheng Wen}{zhengwen@yahoo-inc.com}
\icmladdress{Yahoo Labs, Sunnyvale, CA}
\icmlauthor{Azin Ashkan}{azin.ashkan@technicolor.com}
\icmladdress{Technicolor Research, Los Altos, CA}
\vskip 0.3in]

\begin{abstract}
A search engine usually outputs a list of $K$ web pages. The user examines this list, from the first web page to the last, and chooses the first attractive page. This model of user behavior is known as the cascade model. In this paper, we propose \emph{cascading bandits}, a learning variant of the cascade model where the objective is to identify $K$ most attractive items. We formulate our problem as a stochastic combinatorial partial monitoring problem. We propose two algorithms for solving it, $\cascadeucb$ and $\cascadeklucb$. We also prove gap-dependent upper bounds on the regret of these algorithms and derive a lower bound on the regret in cascading bandits. The lower bound matches the upper bound of $\cascadeklucb$ up to a logarithmic factor. We experiment with our algorithms on several problems. The algorithms perform surprisingly well even when our modeling assumptions are violated.
\end{abstract}

%!TEX root = Paper.tex

\section{Introduction}
\label{sec:introduction}

The \emph{cascade model} is a popular model of user behavior in web search \cite{craswell08experimental}. In this model, the user is recommended a list of $K$ items, such as web pages. The user \emph{examines} the recommended list from the first item to the last, and selects the first \emph{attractive} item. In web search, this is manifested as a click. The items before the first attractive item are \emph{not attractive}, because the user examines these items but does not click on them. The items after the first attractive item are \emph{unobserved}, because the user never examines these items. The optimal list, the list of $K$ items that maximizes the probability that the user finds an attractive item, are $K$ most attractive items. The cascade model is simple but effective in explaining the so-called position bias in historical click data \cite{craswell08experimental}. Therefore, it is a reasonable model of user behavior.

In this paper, we propose an online learning variant of the cascade model, which we refer to as \emph{cascading bandits}. In this model, the learning agent does not know the attraction probabilities of items. At time $t$, the agent recommends to the user a list of $K$ items out of $L$ items and then observes the index of the item that the user clicks. If the user clicks on an item, the agent receives a reward of one. The goal of the agent is to maximize its total reward, or equivalently to minimize its cumulative regret with respect to the list of $K$ most attractive items. Our learning problem can be viewed as a bandit problem where the reward of the agent is a part of its feedback. But the feedback is richer than the reward. Specifically, the agent knows that the items before the first attractive item are not attractive.

We make five contributions. First, we formulate a learning variant of the cascade model as a stochastic combinatorial partial monitoring problem. Second, we propose two algorithms for solving it, $\cascadeucb$ and $\cascadeklucb$. $\cascadeucb$ is motivated by $\combucb$, a computationally and sample efficient algorithm for stochastic combinatorial semi-bandits \cite{gai12combinatorial,kveton15tight}. $\cascadeklucb$ is motivated by $\klucb$  and we expect it to perform better when the attraction probabilities of items are low \cite{garivier11klucb}. This setting is common in the problems of our interest, such as web search. Third, we prove gap-dependent upper bounds on the regret of our algorithms. Fourth, we derive a lower bound on the regret in cascading bandits. This bound matches the upper bound of $\cascadeklucb$ up to a logarithmic factor. Finally, we experiment with our algorithms on several problems. They perform well even when our modeling assumptions are not satisfied.

Our paper is organized as follows. In \cref{sec:background}, we review the cascade model. In \cref{sec:cascading bandits}, we introduce our learning problem and propose two UCB-like algorithms for solving it. In \cref{sec:analysis}, we derive gap-dependent upper bounds on the regret of $\cascadeucb$ and $\cascadeklucb$. In addition, we prove a lower bound and discuss how it relates to our upper bounds. We experiment with our learning algorithms in \cref{sec:experiments}. In \cref{sec:related work}, we review related work. We conclude in \cref{sec:conclusions}.

%!TEX root = Paper.tex

\section{Background}
\label{sec:background}

Web pages in a search engine can be ranked automatically by fitting a model of user behavior in web search from historical click data \cite{radlinski05query,agichtein06improving}. The user is typically assumed to scan a list of $K$ web pages $A = (a_1, \dots, a_K)$, which we call \emph{items}. The items belong to some \emph{ground set} $E = \set{1, \dots, L}$, such as the set of all web pages. Many models of user behavior in web search exist \cite{becker07modeling,craswell08experimental,richardson07predicting}. Each of them explains the clicks of the user differently. We focus on the cascade model.

The \emph{cascade model} is a popular model of user behavior in web search \cite{craswell08experimental}. In this model, the user scans a list of $K$ items $A = (a_1, \dots, a_K) \in \Pi_K(E)$ from the first item $a_1$ to the last $a_K$, where $\Pi_K(E)$ is the set of all \emph{$K$-permutations} of set $E$. The model is parameterized by \emph{attraction probabilities} $\bar{w} \in [0, 1]^E$. After the user examines item $a_k$, the item attracts the user with probability $\bar{w}(a_k)$, \emph{independently} of the other items. If the user is attracted by item $a_k$, the user clicks on it and does not examine the remaining items. If the user is not attracted by item $a_k$, the user examines item $a_{k + 1}$. It is easy to see that the probability that item $a_k$ is examined is $\prod_{i = 1}^{k - 1} (1 - \bar{w}(a_i))$, and that the probability that at least one item in $A$ is attractive is $1 - \prod_{i = 1}^{K} (1 - \bar{w}({a_i}))$. This objective is maximized by $K$ most attractive items.

The cascade model assumes that the user clicks on at most one item. In practice, the user may click on multiple items. The cascade model cannot explain this pattern. Therefore, the model was extended in several directions, for instance to take into account \emph{multiple clicks} and the \emph{persistence} of users \cite{chapelle09dynamic,guo09click,guo09efficient}. The extended models explain click data better than the cascade model. Nevertheless, the cascade model is still very attractive, because it is simpler and can be reasonably fit to click data. Therefore, as a first step towards understanding more complex models, we study an online variant of the cascade model in this work.

%!TEX root = Paper.tex

\section{Cascading Bandits}
\label{sec:cascading bandits}

We propose a learning variant of the cascade model (\cref{sec:setting}) and two computationally-efficient algorithms for solving it (\cref{sec:algorithms}). To simplify exposition, all random variables are written in bold.

\subsection{Setting}
\label{sec:setting}

We refer to our learning problem as a \emph{generalized cascading bandit}. Formally, we represent the problem by a tuple $B = (E, P, K)$, where $E = \set{1, \dots, L}$ is a \emph{ground set} of $L$ items, $P$ is a probability distribution over a unit hypercube $\set{0, 1}^E$, and $K \leq L$ is the number of recommended items. We call the bandit \emph{generalized} because the form of the distribution $P$ has not been specified yet.

Let $(\rnd{w}_t)_{t = 1}^n$ be an i.i.d. sequence of $n$ \emph{weights} drawn from $P$, where $\rnd{w}_t \in \set{0, 1}^E$ and $\rnd{w}_t(e)$ is the preference of the user for item $e$ at time $t$. That is, $\rnd{w}_t(e) = 1$ if and only if item $e$ attracts the user at time $t$. The learning agent interacts with our problem as follows. At time $t$, the agent recommends a list of $K$ items $\rnd{A}_t = (\rnd{a}^t_1, \dots, \rnd{a}^t_K) \in \Pi_K(E)$. The list is computed from the observations of the agent up to time $t$. The user examines the list, from the first item $\rnd{a}^t_1$ to the last $\rnd{a}^t_K$, and clicks on the first attractive item. If the user is not attracted by any item, the user does not click on any item. Then time increases to $t + 1$.

The reward of the agent at time $t$ can be written in several forms. For instance, as $\max_k \rnd{w}_t(\rnd{a}^t_k)$, at least one item in list $\rnd{A}_t$ is attractive; or as $f(\rnd{A}_t, \rnd{w}_t)$, where:
\begin{align*}
  f(A, w) = 1 - \prod_{k = 1}^K (1 - w(a_k))\,,
\end{align*}
$A = (a_1, \dots, a_K) \in \Pi_K(E)$, and $w \in \set{0, 1}^E$. This later algebraic form is particularly useful in our proofs.

The agent at time $t$ receives feedback:
\begin{align*}
  \rnd{C}_t = \argmin \set{1 \leq k \leq K: \rnd{w}_t(\rnd{a}^t_k) = 1}\,,
\end{align*}
where we assume that $\argmin \emptyset = \infty$. The feedback $\rnd{C}_t$ is the click of the user. If $\rnd{C}_t \leq K$, the user clicks on item $\rnd{C}_t$. If $\rnd{C}_t = \infty$, the user does not click on any item. Since the user clicks on the first attractive item in the list, we can determine the observed weights of all recommended items at time $t$ from $\rnd{C}_t$. In particular, note that:
\begin{align}
  \rnd{w}_t(\rnd{a}^t_k) = \I{\rnd{C}_t = k} \quad k = 1, \dots, \min \set{\rnd{C}_t, K}\,.
  \label{eq:click}
\end{align}
We say that item $e$ is \emph{observed} at time $t$ if $e = \rnd{a}^t_k$ for some $1 \leq k \leq \min \set{\rnd{C}_t, K}$.

In the cascade model (\cref{sec:background}), the weights of the items in the ground set $E$ are distributed independently. We also make this assumption.

\begin{assumption}
\label{ass:independence} The weights $w$ are distributed as:
\begin{align*}
  P(w) = \prod_{e \in E} P_e(w(e))\,,
\end{align*}
where $P_e$ is a Bernoulli distribution with mean $\bar{w}(e)$.
\end{assumption}

Under this assumption, we refer to our learning problem as a \emph{cascading bandit}. In this new problem, the weight of any item at time $t$ is drawn independently of the weights of the other items at that, or any other, time. This assumption has profound consequences and leads to a particularly efficient learning algorithm in \cref{sec:algorithms}. More specifically, under our assumption, the expected reward for list $A \in \Pi_K(E)$, the probability that at least one item in $A$ is attractive, can be expressed as $\EE{f(A, \rnd{w})} = f(A, \bar{w})$, and depends only on the attraction probabilities of individual items in $A$.

The agent's policy is evaluated by its \emph{expected cumulative regret}:
\begin{align*}
  R(n) = \EE{\sum_{t = 1}^n R(\rnd{A}_t, \rnd{w}_t)}\,,
\end{align*}
where $R(\rnd{A}_t, \rnd{w}_t) = f(A^\ast, \rnd{w}_t) - f(\rnd{A}_t, \rnd{w}_t)$ is the \emph{instantaneous stochastic regret} of the agent at time $t$ and:
\begin{align*}
  A^\ast = \argmax_{A \in \Pi_K(E)} f(A, \bar{w})
\end{align*}
is the \emph{optimal list} of items, the list that maximized the reward at any time $t$. Since $f$ is invariant to the permutation of $A$, there exist at least $K!$ optimal lists. For simplicity of exposition, we assume that the optimal solution, as a set, is unique.

\subsection{Algorithms}
\label{sec:algorithms}

We propose two algorithms for solving cascading bandits, $\cascadeucb$ and $\cascadeklucb$. $\cascadeucb$ is motivated by $\ucb$ \cite{auer02finitetime} and $\cascadeklucb$ is motivated by $\klucb$ \cite{garivier11klucb}.

The pseudocode of both algorithms is in \cref{alg:ucb}. The algorithms are similar and differ only in how they estimate the \emph{upper confidence bound (UCB)} $\rnd{U}_t(e)$ on the attraction probability of item $e$ at time $t$. After that, they recommend a list of $K$ items with largest UCBs:
\begin{align}
  \rnd{A}_t = \argmax_{A \in \Pi_K(E)} f(A, \rnd{U}_t)\,.
\end{align}
Note that $\rnd{A}_t$ is determined only up to a permutation of the items in it. The payoff is not affected by this ordering. But the observations are. For now, we leave the order of items unspecified and return to it later in our discussions. After the user provides feedback $\rnd{C}_t$, the algorithms update their estimates of the attraction probabilities $\bar{w}(e)$ based on \eqref{eq:click}, for all $e = \rnd{a}^t_k$ where $k \leq \rnd{C}_t$.

\begin{algorithm}[t]
  \caption{UCB-like algorithm for cascading bandits.}
  \label{alg:ucb}
  \begin{algorithmic}
    \STATE // Initialization
    \STATE Observe $\rnd{w}_0 \sim P$
    \STATE $\forall e \in E: \rnd{T}_0(e) \gets 1$
    \STATE $\forall e \in E: \hat{\rnd{w}}_1(e) \gets \rnd{w}_0(e)$
    \STATE
    \FORALL{$t = 1, \dots, n$}
      \STATE Compute UCBs $\rnd{U}_t(e)$ (\cref{sec:algorithms})
      \STATE
      \STATE // Recommend a list of $K$ items and get feedback
      \STATE Let $\rnd{a}^t_1, \dots, \rnd{a}^t_K$ be $K$ items with largest UCBs
      \STATE $\rnd{A}_t \gets (\rnd{a}^t_1, \dots, \rnd{a}^t_K)$
      \STATE Observe click $\rnd{C}_t \in \set{1, \dots, K, \infty}$
      \STATE
      \STATE // Update statistics
      \STATE $\forall e \in E: \rnd{T}_t(e) \gets \rnd{T}_{t - 1}(e)$
      \FORALL{$k = 1, \dots, \min \set{\rnd{C}_t, K}$}
        \STATE $e \gets \rnd{a}^t_k$
        \STATE $\rnd{T}_t(e) \gets \rnd{T}_t(e) + 1$
        \STATE $\displaystyle \hat{\rnd{w}}_{\rnd{T}_t(e)}(e) \gets
        \frac{\rnd{T}_{t - 1}(e) \hat{\rnd{w}}_{\rnd{T}_{t - 1}(e)}(e) + \I{\rnd{C}_t = k}}{\rnd{T}_t(e)}$
      \ENDFOR
    \ENDFOR
  \end{algorithmic}
\end{algorithm}

The UCBs are computed as follows. In $\cascadeucb$, the UCB on the attraction probability of item $e$ at time $t$ is:
\begin{align*}
  \rnd{U}_t(e) = \hat{\rnd{w}}_{\rnd{T}_{t - 1}(e)}(e) + c_{t - 1, \rnd{T}_{t - 1}(e)}\,,
\end{align*}
where $\hat{\rnd{w}}_s(e)$ is the average of $s$ observed weights of item $e$, $\rnd{T}_t(e)$ is the number of times that item $e$ is observed in $t$ steps, and:
\begin{align*}
  c_{t, s} = \sqrt{(1.5 \log t) / s}
\end{align*}
is the radius of a confidence interval around $\hat{\rnd{w}}_s(e)$ after $t$ steps such that $\bar{w}(e) \in [\hat{\rnd{w}}_s(e) - c_{t, s}, \hat{\rnd{w}}_s(e) + c_{t, s}]$ holds with high probability. In $\cascadeklucb$, the UCB on the attraction probability of item $e$ at time $t$ is:
\begin{align*}
  & \rnd{U}_t(e) = \max\{q \in [\hat{\rnd{w}}_{\rnd{T}_{t - 1}(e)}(e), 1]: \\
  & \quad \rnd{T}_{t - 1}(e) \kl{\hat{\rnd{w}}_{\rnd{T}_{t - 1}(e)}(e)}{q} \leq \log t + 3 \log \log t\}\,,
\end{align*}
where $\kl{p}{q}$ is the \emph{Kullback-Leibler (KL) divergence} between two Bernoulli random variables with means $p$ and $q$. Since $\kl{p}{q}$ is an increasing function of $q$ for $q \geq p$, the above UCB can be computed efficiently.

\subsection{Initialization}
\label{sec:initialization}

Both algorithms are initialized by one sample $\rnd{w}_0$ from $P$. Such a sample can be generated in $O(L)$ steps, by recommending each item once as the first item in the list.

%!TEX root = Paper.tex

\section{Analysis}
\label{sec:analysis}

Our analysis exploits the fact that our reward and feedback models are closely connected. More specifically, we show in \cref{sec:regret decomposition} that the learning algorithm can suffer regret only if it recommends suboptimal items that are observed. Based on this result, we prove upper bounds on the $n$-step regret of $\cascadeucb$ and $\cascadeklucb$ (\cref{sec:upper bounds}). We prove a lower bound on the regret in cascading bandits in \cref{sec:lower bound}. We discuss our results in \cref{sec:discussion}.

\subsection{Regret Decomposition}
\label{sec:regret decomposition}

Without loss of generality, we assume that the items in the ground set $E$ are sorted in decreasing order of their attraction probabilities, $\bar{w}(1) \geq \ldots \geq \bar{w}(L)$. In this setting, the \emph{optimal solution} is $A^\ast = (1, \dots, K)$, and contains the first $K$ items in $E$. We say that item $e$ is \emph{optimal} if $1 \leq e \leq K$. Similarly, we say that item $e$ is \emph{suboptimal} if $K < e \leq L$. The \emph{gap} between the attraction probabilities of suboptimal item $e$ and optimal item $e^\ast$:
\begin{align}
  \Delta_{e, e^\ast} = \bar{w}(e^\ast) - \bar{w}(e)
  \label{eq:gap}
\end{align}
measures the hardness of discriminating the items. Whenever convenient, we view an ordered list of items as the set of items on that list.

Our main technical lemma is below. The lemma says that the expected value of the difference of the products of random variables can be written in a particularly useful form.

\begin{lemma}
\label{lem:modular decomposition} Let $A = (a_1, \dots, a_K)$ and $B = (b_1, \dots, b_K)$ be any two lists of $K$ items from $\Pi_K(E)$ such that $a_i = b_j$ only if $i = j$. Let $\rnd{w} \sim P$ in \cref{ass:independence}. Then:
\begin{align*}
  & \EE{\prod_{k = 1}^K \rnd{w}(a_k) - \prod_{k = 1}^K \rnd{w}(b_k)} =
  \sum_{k = 1}^K \EE{\prod_{i = 1}^{k - 1} \rnd{w}(a_i)} \times {} \\
  & \quad \EE{\rnd{w}(a_k) - \rnd{w}(b_k)} \left(\prod_{j = k + 1}^K \EE{\rnd{w}(b_j)}\right)\,.
\end{align*}
\end{lemma}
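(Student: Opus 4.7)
The plan is a standard telescoping swap: interpolate between $\prod_k \rnd{w}(b_k)$ and $\prod_k \rnd{w}(a_k)$ by replacing one factor at a time.

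Concretely, for $k = 0, 1, \dots, K$ define the hybrid product
\begin{equation*}
  H_k = \prod_{i = 1}^{k} \rnd{w}(a_i) \prod_{j = k + 1}^{K} \rnd{w}(b_j),
\end{equation*}
so that $H_0 = \prod_{j} \rnd{w}(b_j)$ and $H_K = \prod_i \rnd{w}(a_i)$. Telescoping gives
\begin{equation*}
  \prod_{k=1}^K \rnd{w}(a_k) - \prod_{k=1}^K \rnd{w}(b_k)
  = \sum_{k=1}^K (H_k - H_{k-1})
  = \sum_{k=1}^K \prod_{i=1}^{k-1} \rnd{w}(a_i) \, (\rnd{w}(a_k) - \rnd{w}(b_k)) \prod_{j=k+1}^K \rnd{w}(b_j).
\end{equation*}
Taking expectations term by term reduces the lemma to showing, for each $k$, that the three groups of factors $\{\rnd{w}(a_i) : i < k\}$, $\{\rnd{w}(a_k), \rnd{w}(b_k)\}$, and $\{\rnd{w}(b_j) : j > k\}$ involve pairwise distinct items, so that \cref{ass:independence} lets the expectation split across the three groups.

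The main (mild) obstacle is this disjointness check, which is where the hypothesis ``$a_i = b_j$ only if $i = j$'' gets used. For $i < k$ and $j > k$ we have $i \ne j$, so $a_i \ne b_j$; by the same token $a_i \ne b_k$ (would force $i = k$) and $a_k \ne b_j$ (would force $k = j$); the $a_i$'s among themselves, and the $b_j$'s among themselves, are distinct because $A$ and $B$ are $K$-permutations of $E$. The only pair the hypothesis does not rule out is $a_k = b_k$, but in that case $\rnd{w}(a_k) - \rnd{w}(b_k) = 0$ identically, so the corresponding summand vanishes and no independence is needed. In every other case the three groups are supported on disjoint items, so under \cref{ass:independence} the three factors are independent and
\begin{equation*}
  \EE{H_k - H_{k-1}}
  = \EE{\prod_{i=1}^{k-1} \rnd{w}(a_i)} \EE{\rnd{w}(a_k) - \rnd{w}(b_k)} \EE{\prod_{j=k+1}^K \rnd{w}(b_j)},
\end{equation*}
and a final application of independence rewrites $\EE{\prod_{j=k+1}^K \rnd{w}(b_j)}$ as $\prod_{j=k+1}^K \EE{\rnd{w}(b_j)}$, matching the stated formula. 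Summing over $k$ completes the proof.
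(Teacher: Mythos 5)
Your proof is correct and follows essentially the same route as the paper: your telescoping sum over the hybrid products $H_k$ is exactly the identity the paper establishes by induction on $K$ (the induction step there is precisely the add-and-subtract generating one telescoping term), followed by the same factorization of the expectation under \cref{ass:independence}. Your explicit disjointness check, including the observation that the $a_k = b_k$ summand vanishes identically, is a welcome elaboration of the paper's terser ``factored distribution'' remark, but it is not a different argument.
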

\begin{proof}
The claim is proved in \cref{sec:lemmas}.
\end{proof}

Let:
\begin{align}
  \cH_t = (\rnd{A}_1, \rnd{C}_1, \dots, \rnd{A}_{t - 1}, \rnd{C}_{t - 1}, \rnd{A}_t)
  \label{eq:history}
\end{align}
be the \emph{history} of the learning agent up to choosing $\rnd{A}_t$, the first $t - 1$ observations and $t$ actions. Let $\EEt{\cdot} = \condEE{\cdot}{\cH_t}$ be the conditional expectation given history $\cH_t$. We bound $\EEt{R(\rnd{A}_t, \rnd{w}_t)}$, the expected regret conditioned on history $\cH_t$, as follows.

\begin{theorem}
\label{thm:regret decomposition} For any item $e$ and optimal item $e^\ast$, let:
\begin{align}
  G_{e, e^\ast, t} & = \{\exists 1 \leq k \leq K \text{ s.t. } \rnd{a}^t_k = e, \ \pi_t(k) = e^\ast,
  \label{eq:counted event} \\
  & \phantom{{} = \{} \rnd{w}_t(\rnd{a}^t_1) = \ldots = \rnd{w}_t(\rnd{a}^t_{k - 1}) = 0\} \nonumber
\end{align}
be the event that item $e$ is chosen instead of item $e^\ast$ at time $t$, and that item $e$ is observed. Then there exists a permutation $\pi_t$ of optimal items $\set{1, \dots, K}$, which is a deterministic function of $\cH_t$, such that $\rnd{U}_t(\rnd{a}^t_k) \geq \rnd{U}_t(\pi_t(k))$ for all $k$. Moreover:
\begin{align*}
  \EEt{R(\rnd{A}_t, \rnd{w}_t)} & \leq
  \sum_{e = K + 1}^L \sum_{e^\ast = 1}^K \Delta_{e, e^\ast} \EEt{\I{G_{e, e^\ast, t}}} \\
  \EEt{R(\rnd{A}_t, \rnd{w}_t)} & \geq
  \alpha \sum_{e = K + 1}^L \sum_{e^\ast = 1}^K \Delta_{e, e^\ast} \,\EEt{\I{G_{e, e^\ast, t}}}\,,
\end{align*}
where $\alpha = (1 - \bar{w}(1))^{K - 1}$ and $\bar{w}(1)$ is the attraction probability of the most attractive item.
\end{theorem}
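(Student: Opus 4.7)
The plan has three pieces: construct the permutation $\pi_t$ explicitly by matching positions in $\rnd{A}_t$ to optimal items, apply \cref{lem:modular decomposition} to the weights $1 - \rnd{w}_t$, and then recognize each term in the resulting expansion as a summand of $\sum_{e, e^\ast} \Delta_{e, e^\ast} \EEt{\I{G_{e, e^\ast, t}}}$. First I would construct $\pi_t$ as follows: conditionally on $\cH_t$ the list $\rnd{A}_t$ is fixed, so I partition its positions into those carrying an optimal item (indices $k$ with $\rnd{a}^t_k \in \set{1, \dots, K}$) and the rest. On each optimal position set $\pi_t(k) = \rnd{a}^t_k$; on the remaining (suboptimal) positions fix any deterministic bijection onto the optimal items missing from $\rnd{A}_t$. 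The two sides of this bijection have the same cardinality, so $\pi_t$ is a permutation of $\set{1, \dots, K}$. The UCB inequality is then automatic: it is equality on optimal positions, and on suboptimal positions $\pi_t(k) \notin \rnd{A}_t$ while $\rnd{A}_t$ holds the $K$ items of largest UCB, so $\rnd{U}_t(\rnd{a}^t_k) \geq \rnd{U}_t(\pi_t(k))$.

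Next, using the permutation-invariance of $f$, I would replace $A^\ast$ by $B_t := (\pi_t(1), \dots, \pi_t(K))$ inside $f$, giving $R(\rnd{A}_t, \rnd{w}_t) = \prod_{k}(1 - \rnd{w}_t(\rnd{a}^t_k)) - \prod_{k}(1 - \rnd{w}_t(\pi_t(k)))$. The construction guarantees that $\rnd{a}^t_i = \pi_t(j)$ forces $i = j$ (on optimal positions because $\pi_t$ is a bijection with $\pi_t(i) = \rnd{a}^t_i$; on suboptimal positions because $\pi_t(j)$ is always optimal), so \cref{lem:modular decomposition} applies to the independent Bernoulli weights $1 - \rnd{w}_t$ from \cref{ass:independence}, producing
\begin{align*}
  \EEt{R(\rnd{A}_t, \rnd{w}_t)} = \sum_{k=1}^{K} \Big(\prod_{i<k}(1 - \bar{w}(\rnd{a}^t_i))\Big)(\bar{w}(\pi_t(k)) - \bar{w}(\rnd{a}^t_k))\Big(\prod_{j>k}(1 - \bar{w}(\pi_t(j)))\Big).
\end{align*}
Optimal positions contribute zero because the gap factor vanishes. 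For each suboptimal position $k$, the pair $(e, e^\ast) = (\rnd{a}^t_k, \pi_t(k))$ is the unique one for which $G_{e, e^\ast, t}$ can occur, and by \eqref{eq:click} together with \cref{ass:independence} the leading product $\prod_{i<k}(1 - \bar{w}(\rnd{a}^t_i))$ equals $\EEt{\I{G_{e, e^\ast, t}}}$.

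Both bounds then reduce to bounding the trailing factor $\prod_{j>k}(1 - \bar{w}(\pi_t(j)))$: it is at most $1$ (yielding the upper bound) and at least $\alpha = (1 - \bar{w}(1))^{K-1}$, since each $\pi_t(j) \in \set{1, \dots, K}$ satisfies $1 - \bar{w}(\pi_t(j)) \geq 1 - \bar{w}(1)$ and at most $K - 1$ such factors appear (yielding the lower bound). The hard part will be setting up the matching so that the distinctness hypothesis of \cref{lem:modular decomposition} is met while simultaneously preserving the UCB ordering; once the deliberate choice $\pi_t(k) = \rnd{a}^t_k$ on optimal positions is in place, the expansion above and its identification with $\EEt{\I{G_{e, e^\ast, t}}}$ are essentially mechanical, and the two-sided conclusion follows from the elementary estimate of a single product.
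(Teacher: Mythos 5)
Your proposal is correct and follows essentially the same route as the paper's proof: the same construction of $\pi_t$ (identity on optimal positions, arbitrary deterministic matching elsewhere), the same application of \cref{lem:modular decomposition} to the complemented weights $1 - \rnd{w}_t$, the same identification of the leading product with $\EEt{\I{G_{e, e^\ast, t}}}$, and the same $[(1 - \bar{w}(1))^{K-1}, 1]$ bracketing of the trailing product. The only difference is that you spell out more explicitly why the distinctness hypothesis of the lemma and the UCB inequality hold, which the paper leaves implicit.
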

\begin{proof}
We define $\pi_t$ as follows. For any $k$, if the $k$-th item in $\rnd{A}_t$ is optimal, we place this item at position $k$, $\pi_t(k) = \rnd{a}^t_k$. The remaining optimal items are positioned arbitrarily. Since $A^\ast$ is optimal with respect to $\bar{w}$, $\bar{w}(\rnd{a}^t_k) \leq \bar{w}(\pi_t(k))$ for all $k$. Similarly, since $\rnd{A}_t$ is optimal with respect to $\rnd{U}_t$, $\rnd{U}_t(\rnd{a}^t_k) \geq \rnd{U}_t(\pi_t(k))$ for all $k$. Therefore, $\pi_t$ is the desired permutation.

The permutation $\pi_t$ reorders the optimal items in a convenient way. Since time $t$ is fixed, let $\rnd{a}^\ast_k = \pi_t(k)$. Then:
\begin{align*}
  & \EEt{R(\rnd{A}_t, \rnd{w}_t)} = \\
  & \quad \EEt{\prod_{k = 1}^K (1 - \rnd{w}_t(\rnd{a}^t_k)) - \prod_{k = 1}^K (1 - \rnd{w}_t(\rnd{a}^\ast_k))}\,.
\end{align*}
Now we exploit the fact that the entries of $\rnd{w}_t$ are independent of each other given $\cH_t$. By \cref{lem:modular decomposition}, we can rewrite the right-hand side of the above equation as:
\begin{align*}
  & \sum_{k = 1}^K \EEt{\prod_{i = 1}^{k - 1} (1 - \rnd{w}_t(\rnd{a}^t_i))}
  \EEt{\rnd{w}_t(\rnd{a}^\ast_k) - \rnd{w}_t(\rnd{a}^t_k)} \times {} \\
  & \quad \left(\prod_{j = k + 1}^K \EEt{1 - \rnd{w}_t(\rnd{a}^\ast_j)}\right)\,.
\end{align*}
Note that $\EEt{\rnd{w}_t(\rnd{a}^\ast_k) - \rnd{w}_t(\rnd{a}^t_k)} = \Delta_{\rnd{a}^t_k, \rnd{a}^\ast_k}$. Furthermore, $\prod_{i = 1}^{k - 1} (1 - \rnd{w}_t(\rnd{a}^t_i)) = \I{G_{\rnd{a}^t_k, \rnd{a}^\ast_k, t}}$ by conditioning on $\cH_t$. Therefore, we get that $\EEt{R(\rnd{A}_t, \rnd{w}_t)}$ is equal to:
\begin{align*}
  \sum_{k = 1}^K \Delta_{\rnd{a}^t_k, \rnd{a}^\ast_k} \EEt{\I{G_{\rnd{a}^t_k, \rnd{a}^\ast_k, t}}}
  \prod_{j = k + 1}^K \EEt{1 - \rnd{w}_t(\rnd{a}^\ast_j)}\,.
\end{align*}
By definition of $\pi_t$, $\Delta_{\rnd{a}^t_k, \rnd{a}^\ast_k} = 0$ when item $\rnd{a}^t_k$ is optimal. In addition, $1 - \bar{w}(1) \leq \EEt{1 - \rnd{w}_t(\rnd{a}^\ast_j)} \leq 1$ for any optimal $\rnd{a}^\ast_j$. Our upper and lower bounds on $\EEt{R(\rnd{A}_t, \rnd{w}_t)}$ follow from these observations.
\end{proof}

\subsection{Upper Bounds}
\label{sec:upper bounds}

In this section, we derive two upper bounds on the $n$-step regret of $\cascadeucb$ and $\cascadeklucb$.

\begin{theorem}
\label{thm:ucb1} The expected $n$-step regret of $\cascadeucb$ is bounded as:
\begin{align*}
  R(n) \leq
  \sum_{e = K + 1}^L \frac{12}{\Delta_{e, K}} \log n + \frac{\pi^2}{3} L\,.
\end{align*}
\end{theorem}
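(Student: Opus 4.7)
The plan is to derive \cref{thm:ucb1} by combining \cref{thm:regret decomposition} with a standard \ucb-style concentration argument in the spirit of \cite{auer02finitetime,kveton15tight}. Taking total expectations in the upper bound of \cref{thm:regret decomposition} yields
\[
  R(n) \leq \sum_{t=1}^n \sum_{e=K+1}^L \sum_{e^\ast=1}^K \Delta_{e,e^\ast}\, \Pr(G_{e,e^\ast,t}),
\]
and I observe that for any fixed $t$ and suboptimal $e$, at most one optimal $e^\ast$ satisfies $G_{e,e^\ast,t}$: the position $k$ with $\rnd{a}^t_k = e$ is unique (items in $\rnd{A}_t$ are distinct), and since $\pi_t$ is a deterministic function of $\cH_t$, the value $e^\ast = \pi_t(k)$ is determined.

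Next I introduce the favorable event $\cE_t = \{|\hat{\rnd{w}}_s(e) - \bar{w}(e)| \leq c_{t-1,s}\ \text{for all}\ e \in E,\, 1 \leq s \leq t-1\}$. Hoeffding's inequality, with the identity $2 s c_{t-1,s}^2 = 3 \log(t-1)$, gives $\Pr(|\hat{\rnd{w}}_s(e) - \bar{w}(e)| > c_{t-1,s}) \leq 2(t-1)^{-3}$; union-bounding over $e$ and $s$ then yields $\Pr(\cE_t^c) \leq 2L(t-1)^{-2}$. Because the instantaneous regret is at most one, the total contribution of $\{\cE_t^c\}_t$ to $R(n)$ is at most $2L \sum_{t \geq 2}(t-1)^{-2} \leq \pi^2 L/3$, which matches the additive term in the theorem.

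On $\cE_t$, whenever $G_{e,e^\ast,t}$ occurs, the permutation property $\rnd{U}_t(\rnd{a}^t_k) \geq \rnd{U}_t(\pi_t(k))$ from \cref{thm:regret decomposition} combined with the confidence bounds produces the chain
\[
  \bar{w}(e^\ast) \leq \rnd{U}_t(e^\ast) \leq \rnd{U}_t(e) \leq \bar{w}(e) + 2\, c_{t-1,\rnd{T}_{t-1}(e)},
\]
and therefore $\Delta_{e,e^\ast} \leq 2\, c_{t-1,\rnd{T}_{t-1}(e)}$. Since $\Delta_{e,e^\ast} \geq \Delta_{e,K}$ for every optimal $e^\ast$, this further implies the stopping condition $\rnd{T}_{t-1}(e) \leq 6 \log n / \Delta_{e,K}^2$.

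The remaining step converts the outer sum over $t$ into a sum over the observations of each suboptimal $e$: each time $e$ is observed the counter $\rnd{T}_{t-1}(e)$ advances to the next integer, so on $\bigcap_t \cE_t$ the accumulated $\Delta$-weighted count for item $e$ is bounded by
\[
  \sum_{i=1}^{\lceil 6 \log n / \Delta_{e,K}^2 \rceil} 2\sqrt{\frac{1.5 \log n}{i}} \leq 4 \sqrt{1.5 \log n} \cdot \sqrt{\frac{6 \log n}{\Delta_{e,K}^2}} = \frac{12 \log n}{\Delta_{e,K}},
\]
using $\sum_{i=1}^N i^{-1/2} \leq 2\sqrt{N}$. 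Summing over $e = K+1, \dots, L$ and adding the bad-event contribution completes the proof. The main subtlety is the per-observation accounting: one must use the matched gap $\Delta_{e,e^\ast}$ (rather than a loose upper bound like $\Delta_{e,1}$) in the concentration inequality and sum the confidence radii $c_{t-1,i}$ over observations, and this is precisely what the construction of $\pi_t$ in \cref{thm:regret decomposition} makes possible.
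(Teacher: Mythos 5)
Your proof is correct and arrives at the stated bound, but the final counting step takes a genuinely different route from the paper's. The paper keeps the per-pair counts $\rnd{M}_{e, e^\ast} = \sum_{t = 1}^n \I{\rnd{T}_{t - 1}(e) \leq \tau_{e, e^\ast}, \ G_{e, e^\ast, t}}$ with $\tau_{e, e^\ast} = \frac{6}{\Delta_{e, e^\ast}^2} \log n$, observes the constraints $\rnd{M}_{e, e^\ast} \leq \tau_{e, e^\ast}$ and $\sum_{e^\ast} \rnd{M}_{e, e^\ast} \leq \tau_{e, K}$, solves the resulting linear program (whose optimum telescopes over $e^\ast$), and then invokes Lemma 3 of \citet{kveton14matroid} to collapse the telescoping sum to $\frac{12}{\Delta_{e, K}} \log n$. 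You instead charge each occurrence of $G_{e, e^\ast, t}$ its confidence width $2 c_{n, \rnd{T}_{t - 1}(e)}$ --- legitimate because $\Delta_{e, e^\ast} \leq 2 c_{t - 1, \rnd{T}_{t - 1}(e)} \leq 2 c_{n, \rnd{T}_{t - 1}(e)}$ on the good event --- and then sum $\sum_{i = 1}^{N} i^{-1/2} \leq 2 \sqrt{N}$ over the at most $N \leq \frac{6}{\Delta_{e, K}^2} \log n$ distinct counter values; both routes produce exactly the constant $12$. The one prerequisite your charging scheme needs --- that for a suboptimal $e$ the counter $\rnd{T}(e)$ advances precisely when $G_{e, e^\ast, t}$ occurs for the unique $e^\ast = \pi_t(k)$, so the enumerated counter values are distinct consecutive integers --- is the same fact the paper uses to justify its LP constraints, and you state it correctly. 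Your version is more self-contained (no external peeling lemma) and is the standard sum-of-confidence-widths device from combinatorial semi-bandits; the paper's LP formulation is the one that transfers to \cref{thm:klucb}, where the KL confidence radius has no closed form in $\rnd{T}_{t-1}(e)$ and the summation-of-radii trick is unavailable, forcing the integral bound of \cref{lem:klucb peeling} instead. (The ceiling in your upper summation limit is harmless: since $\rnd{T}_{t-1}(e)$ is integer-valued and at most $\tau_{e,K}$, the number of charged terms is in fact at most $\floors{\tau_{e,K}} \leq \tau_{e,K}$, which is what your final square root uses.)
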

\begin{proof}
The complete proof is in \cref{sec:proof ucb1}. The proof has four main steps. First, we bound the regret of the event that $\bar{w}(e)$ is outside of the high-probability confidence interval around $\hat{\rnd{w}}_{\rnd{T}_{t - 1}(e)}(e)$ for at least one item $e$. Second, we decompose the regret at time $t$ and apply \cref{thm:regret decomposition} to bound it from above. Third, we bound the number of times that each suboptimal item is chosen in $n$ steps. Fourth, we peel off an extra factor of $K$ in our upper bound based on \citet{kveton14matroid}. Finally, we sum up the regret of all suboptimal items.
\end{proof}

\begin{theorem}
\label{thm:klucb} For any $\eps > 0$, the expected $n$-step regret of $\cascadeklucb$ is bounded as:
\begin{align*}
  R(n)
  & \leq \sum_{e = K + 1}^L
  \frac{(1 + \eps) \Delta_{e, K} (1 + \log(1 / \Delta_{e, K}))}{\kl{\bar{w}(e)}{\bar{w}(K)}} \times {} \\
  & \qquad\qquad\ (\log n + 3 \log \log n) + C\,,
\end{align*}
where $C = K L \frac{C_2(\eps)}{n^{\beta(\eps)}} + 7 K \log \log n$, and the constants $C_2(\eps)$ and $\beta(\eps)$ are defined in \citet{garivier11klucb}. 
\end{theorem}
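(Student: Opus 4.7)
The plan is to mirror the four-step structure used in the proof of \cref{thm:ucb1}, substituting the Hoeffding-based confidence radius $c_{t,s}$ with the Kullback-Leibler machinery of \citet{garivier11klucb}. The regret decomposition in \cref{thm:regret decomposition} is again the bridge between the combinatorial cascade feedback and the one-dimensional concentration analysis carried out item by item.

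First I would define the favorable event $\cE = \bigcap_{t \leq n} \set{\rnd{U}_t(e^\ast) \geq \bar{w}(e^\ast) \text{ for every } 1 \leq e^\ast \leq K}$. Applying the KL-UCB deviation result (Theorem 10 of \citet{garivier11klucb}) once per optimal arm bounds the regret incurred on the complement $\ccE$ by $7 K \log \log n$ plus a $K L\, C_2(\eps)/n^{\beta(\eps)}$ residual tail; the additional $L$ factor is picked up because the same deviation bound is later invoked in Step 3 across all ground-set items. Together these account for the additive constant $C$ in the theorem.

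Second, on $\cE$ I would apply \cref{thm:regret decomposition} to write
\begin{align*}
R(n) \leq \EE{\sum_{t=1}^n \sum_{e=K+1}^L \sum_{e^\ast=1}^K \Delta_{e,e^\ast}\, \I{G_{e,e^\ast,t}}} + (\text{regret on } \ccE).
\end{align*}
On $\cE \cap G_{e,e^\ast,t}$ the inequality $\rnd{U}_t(\rnd{a}^t_k) \geq \rnd{U}_t(\pi_t(k))$ from \cref{thm:regret decomposition}, chained with $\rnd{U}_t(e^\ast) \geq \bar{w}(e^\ast) \geq \bar{w}(K)$, forces item $e$ to pass its KL-UCB test against $\bar{w}(K)$, i.e.\ $\rnd{T}_{t-1}(e)\, \kl{\hat{\rnd{w}}_{\rnd{T}_{t-1}(e)}(e)}{\bar{w}(K)} \leq \log t + 3 \log \log t$. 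Since $e$ is simultaneously observed, the counter $\rnd{T}_t(e)$ is incremented at each such time, so the total count of these events is the object to control. For each fixed suboptimal $e$, I would then invoke the count lemma behind Theorem 2 of \citet{garivier11klucb} to bound this number by $(1+\eps)(\log n + 3 \log \log n)/\kl{\bar{w}(e)}{\bar{w}(K)}$ up to the same $C_2(\eps)/n^{\beta(\eps)}$ tail. This produces a per-item regret contribution of the form $\bar{\Delta}_e \cdot (1+\eps)(\log n + 3 \log \log n)/\kl{\bar{w}(e)}{\bar{w}(K)}$, where $\bar{\Delta}_e$ is the effective gap obtained after summing over $e^\ast \in \set{1,\ldots,K}$.

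The final and crucial step is to show $\bar{\Delta}_e \leq \Delta_{e,K}\,(1 + \log(1/\Delta_{e,K}))$ by adapting the peeling argument of \citet{kveton14matroid}: partition the time indices on which $G_{e,\cdot,t}$ fires into groups indexed by the magnitude of the associated gap $\Delta_{e, e^\ast}$, apply the count lemma at the group-specific resolution, and sum the resulting geometric series. This is where I expect the main obstacle to lie. Unlike the Hoeffding peeling of \citet{kveton14matroid}, which exploited a clean $1/\Delta^2$ scaling, $1/\kl{\bar{w}(e)}{\bar{w}(K)}$ has no polynomial dependence on the gap, so the buckets have to be chosen on a logarithmic scale and the constants tracked carefully; the $(1 + \log(1/\Delta_{e,K}))$ factor is precisely the logarithmic number of such buckets. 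Once $\bar{\Delta}_e$ is bounded, summing over $e = K+1,\ldots,L$ and collecting the $\ccE$ contribution from Step 1 yields the stated bound.
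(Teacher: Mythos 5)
Your Steps 1--3 track the paper's proof closely: the same bad event $\set{\exists\, e^\ast \leq K : \bar{w}(e^\ast) > \rnd{U}_t(e^\ast)}$ handled by Theorem 10 of \citet{garivier11klucb} (giving $7K\log\log n$), the same application of \cref{thm:regret decomposition} after the tower rule, and the same per-pair thresholds $\tau_{e,e^\ast} = \frac{1+\eps}{\kl{\bar{w}(e)}{\bar{w}(e^\ast)}}(\log n + 3\log\log n)$ whose over-threshold tails contribute the $KL\,C_2(\eps)/n^{\beta(\eps)}$ term. The genuine gap is in your final step. You correctly identify the obstacle---$1/\kl{\bar{w}(e)}{\bar{w}(e^\ast)}$ has no clean polynomial scaling in the gap, so the Hoeffding-style peeling of \citet{kveton14matroid} does not transfer directly---but you do not resolve it: asserting that a logarithmic bucketing ``produces'' the $(1+\log(1/\Delta_{e,K}))$ factor is precisely the claim that needs proof, and any bucketing by gap magnitude still forces you to compare $\kl{\bar{w}(e)}{\bar{w}(e^\ast)}$ for intermediate $e^\ast$ against $\kl{\bar{w}(e)}{\bar{w}(K)}$, which is where all the work lies.

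The paper closes this in \cref{lem:klucb peeling}. First, the same exchange/counting argument as in the proof of \cref{thm:ucb1} (each $G_{e,e^\ast,t}$ increments $\rnd{T}(e)$, at most one $e^\ast$ fires per round, and the thresholds are ordered) reduces the per-item leading constant to the telescoping sum $\sum_{k=1}^{K-1}\frac{\Delta_k - \Delta_{k+1}}{\kl{p}{p_k}} + \frac{\Delta_K}{\kl{p}{p_K}}$ with $p = \bar{w}(e)$ and $p_k = \bar{w}(k)$. Second, by monotonicity of $1/\kl{p}{p+x}$ this sum is at most $\int_{\Delta_K}^{1}\frac{1}{\kl{p}{p+x}}\ud x$. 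Third---and this is the ingredient missing from your sketch---convexity of $x \mapsto \kl{p}{p+x}$ with its minimum at $x=0$ gives the linear lower bound $\kl{p}{p+x} \geq \frac{\kl{p}{p_K}}{\Delta_K}\,x$ for $x \geq \Delta_K$, so the integral is at most $\frac{\Delta_K}{\kl{p}{p_K}}\int_{\Delta_K}^1 \frac{1}{x}\ud x = \frac{\Delta_K \log(1/\Delta_K)}{\kl{p}{p_K}}$. That convexity step is what converts the ``no polynomial dependence'' problem into a single $\log(1/\Delta_{e,K})$ factor; without it, or an equivalent comparison between the intermediate divergences and $\kl{\bar{w}(e)}{\bar{w}(K)}$, your geometric series does not sum to the stated bound.
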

\begin{proof}
The complete proof is in \cref{sec:proof klucb}. The proof has four main steps. First, we bound the regret of the event that $\bar{w}(e) > \rnd{U}_t(e)$ for at least one optimal item $e$. Second, we decompose the regret at time $t$ and apply \cref{thm:regret decomposition} to bound it from above. Third, we bound the number of times that each suboptimal item is chosen in $n$ steps. Fourth, we derive a new peeling argument for $\klucb$ (\cref{lem:klucb peeling}) and eliminate an extra factor of $K$ in our upper bound. Finally, we sum up the regret of all suboptimal items.
\end{proof}

%!TEX root = Paper.tex

\subsection{Lower Bound}
\label{sec:lower bound}

Our lower bound is derived on the following problem. The ground set contains $L$ items $E = \set{1, \dots, L}$. The distribution $P$ is a product of $L$ Bernoulli distributions $P_e$, each of which is parameterized by:
\begin{align}
  \bar{w}(e) =
  \begin{cases}
    p & e \leq K \\
    p - \Delta & \text{otherwise}\,,
  \end{cases}
  \label{eq:attraction probability}
\end{align}
where $\Delta \in (0, p)$ is the gap between any optimal and suboptimal items. We refer to the resulting bandit problem as $B_\mathrm{LB}(L, K, p, \Delta)$; and parameterize it by $L$, $K$, $p$, and $\Delta$.

Our lower bound holds for consistent algorithms. We say that the algorithm is \emph{consistent} if for any cascading bandit, any suboptimal list $A$, and any $\alpha > 0$, $\EE{\rnd{T}_n(A)} = o(n^\alpha)$, where $\rnd{T}_n(A)$ is the number of times that list $A$ is recommended in $n$ steps. Note that the restriction to the consistent algorithms is without loss of generality. \mbox{The reason is} that any inconsistent algorithm must suffer polynomial regret on some instance of cascading bandits, and therefore cannot achieve logarithmic regret on every instance of our problem, similarly to $\cascadeucb$ and $\cascadeklucb$.

\begin{theorem}
\label{thm:lower bound} For any cascading bandit $B_\mathrm{LB}$, the regret of any consistent algorithm is bounded from below as:
\begin{align*}
  \liminf_{n \to \infty} \frac{R(n)}{\log n} \geq
  \frac{(L - K) \Delta (1 - p)^{K - 1}}{\kl{p - \Delta}{p}}\,.
\end{align*}
\end{theorem}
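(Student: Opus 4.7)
The plan is to combine the lower bound half of the regret decomposition in \cref{thm:regret decomposition} with a standard change-of-measure argument to lower bound the expected number of times each suboptimal item is observed. Fix the instance $B_{\mathrm{LB}}(L, K, p, \Delta)$, so that $\bar{w}(e^\ast) - \bar{w}(e) = \Delta$ for every optimal $e^\ast$ and suboptimal $e$, and $\alpha = (1 - p)^{K-1}$. For a suboptimal item $e$, note that at any time $t$ at most one position $k$ in $\rnd{A}_t$ satisfies $\rnd{a}^t_k = e$; on that event $\pi_t(k)$ is some optimal item, so $\sum_{e^\ast = 1}^K \I{G_{e, e^\ast, t}}$ equals the indicator that suboptimal item $e$ is observed at time $t$. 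Summing the lower bound of \cref{thm:regret decomposition} over $t = 1, \dots, n$ and over suboptimal $e$, and letting $\rnd{N}_n(e)$ denote the number of times $e$ is observed in $n$ rounds, I would obtain
\begin{align*}
  R(n) \;\geq\; (1 - p)^{K-1}\, \Delta \sum_{e = K+1}^{L} \EE{\rnd{N}_n(e)}.
\end{align*}

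The heart of the argument is therefore a per-item Lai--Robbins-type bound $\liminf_{n \to \infty} \EE{\rnd{N}_n(e)} / \log n \geq 1 / \kl{p-\Delta}{p}$ for every suboptimal $e$. For each suboptimal $e$ and $\eta \in (0, 1 - p)$, I would introduce an alternative bandit $B'_{e,\eta}$ identical to $B_{\mathrm{LB}}$ except that item $e$ has attraction probability $p + \eta$; in $B'_{e,\eta}$ any optimal list of the original problem is suboptimal, because it omits $e$. Let $\mathbb{P}$ and $\mathbb{P}'$ be the laws of the history $\cH_{n+1}$ under $B_{\mathrm{LB}}$ and $B'_{e,\eta}$. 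Only the distribution of $e$ differs between the two models, and by \eqref{eq:click} an observation of $\rnd{a}^t_k = e$ reveals exactly the Bernoulli realization $\rnd{w}_t(e)$. The standard chain-rule / divergence decomposition then gives
\begin{align*}
  \kl{\mathbb{P}}{\mathbb{P}'} \;=\; \EE{\rnd{N}_n(e)} \,\kl{p - \Delta}{p + \eta}.
\end{align*}

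Applying the data-processing inequality with the event $\cE_n = \{\rnd{N}_n(e) \leq n / 2\}$ (equivalently, ``item $e$ is observed less than half the time''), I would compare $\mathbb{P}(\cE_n)$ and $\mathbb{P}'(\cE_n)$. Consistency in $B_{\mathrm{LB}}$ forces $\mathbb{P}(\cE_n) \to 1$ (a suboptimal item cannot be observed a positive fraction of the time), whereas consistency in $B'_{e,\eta}$, combined with Markov's inequality applied to the total number of recommendations of lists not containing $e$, forces $\mathbb{P}'(\cE_n) \to 0$. The binary-KL lower bound $\kl{\mathbb{P}}{\mathbb{P}'} \geq \kl{\mathbb{P}(\cE_n)}{\mathbb{P}'(\cE_n)} = (1 + o(1)) \log n$ therefore yields $\liminf_{n \to \infty} \EE{\rnd{N}_n(e)} / \log n \geq 1 / \kl{p - \Delta}{p + \eta}$, and letting $\eta \downarrow 0$ and using continuity of $\kl{\cdot}{\cdot}$ in its second argument gives the desired per-item bound. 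Substituting into the display above and summing over the $L - K$ suboptimal items delivers the claimed constant $(L - K)\Delta(1-p)^{K-1} / \kl{p-\Delta}{p}$.

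The main obstacle is the change-of-measure step: one must verify that the log-likelihood ratio between $\mathbb{P}$ and $\mathbb{P}'$ truly telescopes into $\rnd{N}_n(e)$ i.i.d.\ Bernoulli contributions despite the partial-monitoring feedback (items past the first click are unobserved, and whether $e$ is observed depends adaptively on the history). The key observation to justify this is that at each round the filtration is generated by $(\rnd{A}_t, \rnd{C}_t)$, and given $\rnd{A}_t$ the law of $\rnd{C}_t$ differs between the two models only through the factor indexed by $e$, and only on the event that $e$ is actually examined; hence each term in the Radon--Nikodym derivative contributes $\kl{p - \Delta}{p + \eta}$ precisely on the rounds counted by $\rnd{N}_n(e)$. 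Once this is in place, the rest is a routine application of the Kaufmann--Capp\'e--Garivier / Lai--Robbins machinery.
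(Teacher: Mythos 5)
Your overall route is the same as the paper's: the lower bound half of \cref{thm:regret decomposition} reduces the regret to $(1-p)^{K-1}\Delta\sum_{e>K}\EE{\rnd{N}_n(e)}$ with $\rnd{N}_n(e)$ the \emph{observation} counter (your identification of $\sum_{e^\ast}\I{G_{e,e^\ast,t}}$ with the indicator that $e$ is observed is exactly the step the paper uses), and the remaining work is a per-item Lai--Robbins bound on $\EE{\rnd{N}_n(e)}$. The paper disposes of that second step in one sentence by citing \citet{lai85asymptotically}; you attempt to actually prove it, and your divergence decomposition $\kl{\mathbb{P}}{\mathbb{P}'}=\EE{\rnd{N}_n(e)}\,\kl{p-\Delta}{p+\eta}$ is correct and is the genuinely non-obvious point: conditioning on $\rnd{A}_t$, the per-round KL between the laws of $\rnd{C}_t$ works out to $\kl{p-\Delta}{p+\eta}$ times the probability that $e$ is \emph{examined}, which is why the observation counter (not the play counter) is the right effective sample size under partial feedback. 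That part is sound.

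There is, however, a concrete flaw in your choice of test event. You take $\cE_n=\{\rnd{N}_n(e)\le n/2\}$ and claim $\mathbb{P}'(\cE_n)\to 0$ because under $B'_{e,\eta}$ the item $e$ is recommended in all but $o(n^\alpha)$ rounds. But being recommended does not imply being observed: even when $e\in\rnd{A}_t$, it is examined only if all preceding items are unattractive, which happens with probability as small as $(1-p)^{K-1}$. When $(1-p)^{K-1}<1/2$ (e.g.\ $p=1/2$, $K=10$), one has $\EE'{\rnd{N}_n(e)}\approx (1-p)^{K-1}n \ll n/2$, so $\mathbb{P}'(\cE_n)\to 1$ rather than $0$, the binary-KL lower bound $\kl{\mathbb{P}(\cE_n)}{\mathbb{P}'(\cE_n)}$ collapses to $o(\log n)$, and your final inequality fails. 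The fix is easy and standard: define the event through the recommendation counter, $\cE_n=\{\sum_{A\ni e}\rnd{T}_n(A)\le n/2\}$ (any history-measurable event works in the data-processing step), for which $\mathbb{P}(\cE_n)\to 1$ by consistency in $B_{\mathrm{LB}}$ and $1-\mathbb{P}'(\cE_n)\ge 1-o(n^{\alpha-1})$ by Markov and consistency in $B'_{e,\eta}$; this still yields $\EE{\rnd{N}_n(e)}\,\kl{p-\Delta}{p+\eta}\ge(1-\alpha-o(1))\log n$ for every $\alpha>0$, and the rest of your argument (letting $\eta\downarrow 0$ and summing over the $L-K$ suboptimal items) goes through.
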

\begin{proof}
By \cref{thm:regret decomposition}, the expected regret at time $t$ conditioned on history $\cH_t$ is bounded from below as:
\begin{align*}
  \EEt{R(\rnd{A}_t, \rnd{w}_t)} \geq
  \Delta (1 - p)^{K - 1} \!\!\!\! \sum_{e = K + 1}^L \sum_{e^\ast = 1}^K \! \EE{\I{G_{e, e^\ast, t}}}\,.
\end{align*}
Based on this result, the $n$-step regret is bounded as:
\begin{align*}
  R(n)
  & \geq \Delta (1 - p)^{K - 1} \sum_{e = K + 1}^L
  \EE{\sum_{t = 1}^n \sum_{e^\ast = 1}^K \I{G_{e, e^\ast, t}}} \\
  & = \Delta (1 - p)^{K - 1} \sum_{e = K + 1}^L \EE{\rnd{T}_n(e)}\,,
\end{align*}
where the last step is based on the fact that the observation counter of item $e$ increases if and only if event $G_{e, e^\ast, t}$ happens. By the work of \citet{lai85asymptotically}, we have that for any suboptimal item $e$:
\begin{align*}
  \liminf_{n \to \infty} \frac{\EE{\rnd{T}_n(e)}}{\log n} \geq \frac{1}{\kl{p - \Delta}{p}}\,.
\end{align*}
Otherwise, the learning algorithm is unable to distinguish instances of our problem where item $e$ is optimal, and thus is not consistent. Finally, we chain all inequalities and get:
\begin{align*}
  \liminf_{n \to \infty} \frac{R(n)}{\log n} \geq
  \frac{(L - K) \Delta (1 - p)^{K - 1}}{\kl{p - \Delta}{p}}\,.
\end{align*}
This concludes our proof.
\end{proof}

Our lower bound is practical when no optimal item is very attractive, $p < 1 / K$. In this case, the learning agent must learn $K$ sufficiently attractive items to identify the optimal solution. This lower bound is not practical when $p$ is close to $1$, because it becomes exponentially small. In this case, other lower bounds would be more practical. For instance, consider a problem with $L$ items where item $1$ is attractive with probability one and all other items are attractive with probability zero. The optimal list of $K$ items in this problem can be found in $L / (2 K)$ steps in expectation.

%!TEX root = Paper.tex

\subsection{Discussion}
\label{sec:discussion}

We prove two gap-dependent upper bounds on the $n$-step regret of $\cascadeucb$ (\cref{thm:ucb1}) and $\cascadeklucb$ (\cref{thm:klucb}). The bounds are $O(\log n)$, linear in the number of items $L$, and they improve as the number of recommended items $K$ increases. The bounds do not depend on the order of recommended items. This is due to the nature of our proofs, where we count events that ignore the positions of the items. We would like to extend our analysis in this direction in future work.

We discuss the tightness of our upper bounds on problem $B_\mathrm{LB}(L, K, p, \Delta)$ in \cref{sec:lower bound} where we set $p = 1 / K$. In this problem, \cref{thm:lower bound} yields an asymptotic lower bound of:
\begin{align}
  \textstyle \Omega\left((L - K) \frac{\Delta}{\kl{p - \Delta}{p}} \log n\right)
  \label{eq:discussion lower bound}
\end{align}
since $(1 - 1 / K)^{K - 1} \geq 1 / e$ for $K > 1$. The $n$-step regret of $\cascadeucb$ is bounded by \cref{thm:ucb1} as:
\begin{align}
  & \textstyle O\left((L - K) \frac{1}{\Delta} \log n\right) \nonumber \\
  & \textstyle \ = O\left((L - K) \frac{\Delta}{\Delta^2} \log n\right) \nonumber \\
  & \textstyle \ = O\left((L - K) \frac{\Delta}{p (1 - p) \kl{p - \Delta}{p}} \log n\right) \nonumber \\
  & \textstyle \ = O\left(K (L - K) \frac{\Delta}{\kl{p - \Delta}{p}} \log n\right)\,,
  \label{eq:discussion ucb1}
\end{align}
where the second equality is by $\kl{p - \Delta}{p} \! \leq \! \frac{\Delta^2}{p (1 - p)}$. The $n$-step regret of $\cascadeklucb$ is bounded by \cref{thm:klucb} as:
\begin{align}
  \textstyle O\left((L - K) \frac{\Delta (1 + \log(1 / \Delta))}{\kl{p - \Delta}{p}} \log n\right)
  \label{eq:discussion klucb}
\end{align}
and matches the lower bound in \eqref{eq:discussion lower bound} up to $\log(1 / \Delta)$. Note that the upper bound of $\cascadeklucb$ \eqref{eq:discussion klucb} is below that of $\cascadeucb$ \eqref{eq:discussion ucb1} when $\log(1 / \Delta) = O(K)$, or equivalently when $\Delta = \Omega(e^{- K})$. It is an open problem whether the factor of $\log(1 / \Delta)$ in \eqref{eq:discussion klucb} can be eliminated.

%!TEX root = Paper.tex

\section{Experiments}
\label{sec:experiments}

\begin{table}[t]
  \centering
  {\small
  \begin{tabular}{rrr|rr} \hline
    $L$ & $K$ & $\Delta$ & $\cascadeucb$ & $\cascadeklucb$ \\ \hline
    16 & 2 & 0.15 & $1290.1 \pm 11.3$ & $357.9 \pm 5.5\phantom{0}$ \\
    16 & 4 & 0.15 & $986.8 \pm 10.8$ & $275.1 \pm 5.8\phantom{0}$ \\
    16 & 8 & 0.15 & $574.8 \pm 7.9\phantom{0}$ & $149.1 \pm 3.2\phantom{0}$ \\
    32 & 2 & 0.15 & $2695.9 \pm 19.8$ & $761.2 \pm 10.4$ \\
    32 & 4 & 0.15 & $2256.8 \pm 12.8$ & $633.2 \pm 7.0\phantom{0}$ \\
    32 & 8 & 0.15 & $1581.0 \pm 20.3$ & $435.4 \pm 5.7\phantom{0}$ \\
    16 & 2 & 0.075 & $2077.0 \pm 32.9$ & $766.0 \pm 18.0$ \\
    16 & 4 & 0.075 & $1520.4 \pm 23.4$ & $538.5 \pm 12.5$ \\
    16 & 8 & 0.075 & $725.4 \pm 12.0$ & $321.0 \pm 16.3$ \\ \hline
  \end{tabular}
  }
  \caption{The $n$-step regret of $\cascadeucb$ and $\cascadeklucb$ in $n = 10^5$ steps. The list $\rnd{A}_t$ is ordered from the largest UCB to the smallest. All results are averaged over $20$ runs.}
  \label{tab:regret bounds}
\end{table}

We conduct four experiments. In \cref{sec:experiments regret bounds}, we validate that the regret of our algorithms scales as suggested by our upper bounds (\cref{sec:upper bounds}). In \cref{sec:experiments worst-first}, we experiment with recommending items $\rnd{A}_t$ in the opposite order, in increasing order of their UCBs. In \cref{sec:experiments imperfect model}, we show that $\cascadeklucb$ performs robustly even when our modeling assumptions are violated. In \cref{sec:experiments ranked bandits}, we compare $\cascadeklucb$ to ranked bandits.

\subsection{Regret Bounds}
\label{sec:experiments regret bounds}

In the first experiment, we validate the qualitative behavior of our upper bounds (\cref{sec:upper bounds}). We experiment with the class of problems $B_\mathrm{LB}(L, K, p, \Delta)$ in \cref{sec:lower bound}. We set $p = 0.2$; and vary $L$, $K$, and $\Delta$. The attraction probability $p$ is set such that it is close to $1 / K$ for the maximum value of $K$ in our experiments. Our upper bounds are reasonably tight in this setting (\cref{sec:discussion}), and we expect the regret of our methods to scale accordingly. We recommend items $\rnd{A}_t$ in decreasing order of their UCBs. This order is motivated by the problem of web search, where higher ranked items are typically more attractive. We run $\cascadeucb$ and $\cascadeklucb$ for $n = 10^5$ steps.

Our results are reported in \cref{tab:regret bounds}. We observe four major trends. First, the regret doubles when the number of items $L$ doubles. Second, the regret decreases when the number of recommended items $K$ increases. These trends are consistent with the fact that our upper bounds are $O(L - K)$. Third, the regret increases when $\Delta$ decreases. Finally, note that $\cascadeklucb$ outperforms $\cascadeucb$. This result is not particularly surprising. $\klucb$ is known to outperform $\ucb$ when the expected payoffs of arms are low \cite{garivier11klucb}, because its confidence intervals get tighter as the Bernoulli parameters get closer to $0$ or $1$.

\subsection{Worst-of-Best First Item Ordering}
\label{sec:experiments worst-first}

\begin{table}[t]
  \centering
  {\small
  \begin{tabular}{rrr|rr} \hline
    $L$ & $K$ & $\Delta$ & $\cascadeucb$ & $\cascadeklucb$ \\ \hline
    16 & 2 & 0.15 & $1160.2 \pm 11.7$ & $333.3 \pm 6.1\phantom{0}$ \\
    16 & 4 & 0.15 & $660.0 \pm 8.3\phantom{0}$ & $209.4 \pm 4.4\phantom{0}$ \\
    16 & 8 & 0.15 & $181.4 \pm 3.9\phantom{0}$ & $60.4 \pm 2.0\phantom{0}$ \\
    32 & 2 & 0.15 & $2471.6 \pm 14.1$ & $716.0 \pm 7.5\phantom{0}$ \\
    32 & 4 & 0.15 & $1615.3 \pm 14.5$ & $482.3 \pm 6.7\phantom{0}$ \\
    32 & 8 & 0.15 & $595.0 \pm 7.8\phantom{0}$ & $201.9 \pm 5.8\phantom{0}$ \\
    16 & 2 & 0.075 & $1989.8 \pm 31.4$ & $785.8 \pm 12.2$ \\
    16 & 4 & 0.075 & $1239.5 \pm 16.2$ & $484.2 \pm 12.5$ \\
    16 & 8 & 0.075 & $336.4 \pm 10.3$ & $139.7 \pm 6.6\phantom{0}$ \\ \hline
  \end{tabular}
  }
  \caption{The $n$-step regret of $\cascadeucb$ and $\cascadeklucb$ in $n = 10^5$ steps. The list $\rnd{A}_t$ is ordered from the smallest UCB to the largest. All results are averaged over $20$ runs.}
  \label{tab:worst-first}
\end{table}

In the second experiment, we recommend items $\rnd{A}_t$ in increasing order of their UCBs. This choice is not very natural and may be even dangerous. In practice, the user could get annoyed if highly ranked items were not attractive. On the other hand, the user would provide a lot of feedback on low quality items, which could speed up learning. We note that the reward in our model does not depend on the order of recommended items (\cref{sec:algorithms}). Therefore, the items can be ordered arbitrarily, perhaps to maximize feedback. In any case, we find it important to study the effect of this counterintuitive ordering, at least to demonstrate the effect of our modeling assumptions.

The experimental setup is the same as in \cref{sec:experiments regret bounds}. Our results are reported in \cref{tab:worst-first}. When compared to \cref{tab:regret bounds}, the regret of $\cascadeucb$ and $\cascadeklucb$ decreases for all settings of $K$, $L$, and $\Delta$; most prominently at large values of $K$. Our current analysis cannot explain this phenomenon and we leave it for future work.

\subsection{Imperfect Model}
\label{sec:experiments imperfect model}

The goal of this experiment is to evaluate $\cascadeklucb$ in the setting where our modeling assumptions are not satisfied, to test its potential beyond our model. We generate data from the \emph{dynamic Bayesian network (DBN) model} of \citet{chapelle09dynamic}, a popular extension of the cascade model which is parameterized by \emph{attraction probabilities} $\rho \in [0, 1]^E$, \emph{satisfaction probabilities} $\nu \in [0, 1]^E$, and the \emph{persistence} of users $\gamma \in (0, 1]$. In the DBN model, the user is recommended a list of $K$ items $A = (a_1, \dots, a_K)$ and examines it from the first recommended item $a_1$ to the last $a_K$. After the user examines item $a_k$, the item attracts the user with probability $\rho(a_k)$. When the user is attracted by the item, the user clicks on it and is satisfied with probability $\nu(a_k)$. If the user is satisfied, the user does not examine the remaining items. In any other case, the user examines item $a_{k + 1}$ with probability $\gamma$. The reward is one if the user is satisfied with the list, and zero otherwise. Note that this is not observed. The regret is defined accordingly. The feedback are clicks of the user. Note that the user can click on multiple items.

The probability that at least one item in $A = (a_1, \dots, a_K)$ is satisfactory is:
\begin{align*}
  \sum_{k = 1}^K \gamma^{k - 1} \bar{w}(a_k) \prod_{i = 1}^{k - 1} (1 - \bar{w}(a_i))\,,
\end{align*}
where $\bar{w}(e) = \rho(e) \nu(e)$ is the probability that item $e$ satisfies the user after being examined. This objective is maximized by the list of $K$ items with largest weights $\bar{w}(e)$ that are ordered in decreasing order of their weights. Note that the order matters.

The above objective is similar to that in cascading bandits (\cref{sec:cascading bandits}). Therefore, it may seem that our learning algorithms (\cref{sec:algorithms}) can also learn the optimal solution to the DBN model. Unfortunately, this is not guaranteed. The reason is that not all clicks of the user are satisfactory. We illustrate this issue on a simple problem. Suppose that the user clicks on multiple items. Then only the last click \emph{can} be satisfactory. But it \emph{does not have to} be. For instance, it could have happened that the user was unsatisfied with the last click, and then scanned the recommended list until the end and left.

We experiment on the class of problems $B_\mathrm{LB}(L, K, p, \Delta)$ in \cref{sec:lower bound} and modify it as follows. The ground set $E$ has $L = 16$ items and $K = 4$. The attraction probability of item $e$ is $\rho(e) = \bar{w}(e)$, where $\bar{w}(e)$ is given in \eqref{eq:attraction probability}. We set $\Delta = 0.15$. The satisfaction probabilities $\nu(e)$ of all items are the same. We experiment with two settings of $\nu(e)$, $1$ and $0.7$; and with two settings of persistence $\gamma$, $1$ and $0.7$. We run $\cascadeklucb$ for $n = 10^5$ steps and use the last click as an indicator that the user is satisfied with the item.

Our results are reported in Figure~\ref{fig:DBN trends}. We observe in all experiments that the regret of $\cascadeklucb$ flattens. This indicates that $\cascadeklucb$ learns the optimal solution to the DBN model. An intuitive explanation for this result is that the exact values of $\bar{w}(e)$ are not needed to perform well. Our current theory does not explain this phenomenon and we leave it for future work.

\subsection{Ranked Bandits}
\label{sec:experiments ranked bandits}

\begin{figure}[t]
  \centering
  \includegraphics[width=3.2in, bb=2.15in 4.25in 6.15in 6.75in]{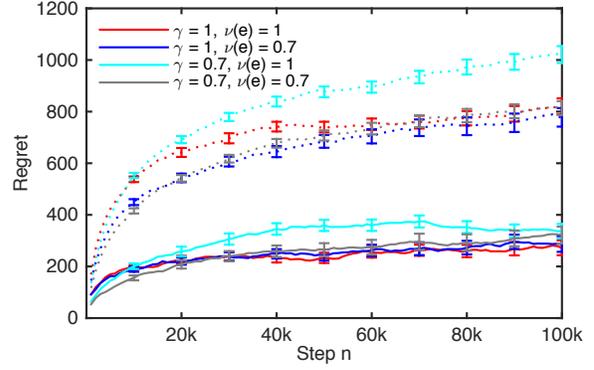}
  \caption{The $n$-step regret of $\cascadeklucb$ (solid lines) and $\rankedklucb$ (dotted lines) in the DBN model in \cref{sec:experiments imperfect model}.}
  \label{fig:DBN trends}
\end{figure}

In our final experiment, we compare $\cascadeklucb$ to a ranked bandit (\cref{sec:related work}) where the base bandit algorithm is $\klucb$. We refer to this method as $\rankedklucb$. The choice of the base algorithm is motivated by the following reasons. First, $\klucb$ is the best performing oracle in our experiments. Second, since both compared approaches use the same oracle, the difference in their regrets is likely due to their statistical efficiency, and not the oracle itself.

The experimental setup is the same as in \cref{sec:experiments imperfect model}. Our results are reported in Figure~\ref{fig:DBN trends}. We observe that the regret of $\rankedklucb$ is significantly larger than the regret of $\cascadeklucb$, about three times. The reason is that the regret in ranked bandits is $\Omega(K)$ (\cref{sec:related work}) and $K = 4$ in this experiment. The regret of our algorithms is $O(L - K)$ (\cref{sec:discussion}). Note that $\cascadeklucb$ is not guaranteed to be optimal in this experiment. Therefore, our results are encouraging and show that $\cascadeklucb$ could be a viable alternative to more established approaches.

%!TEX root = Paper.tex

\section{Related Work}
\label{sec:related work}

\emph{Ranked bandits} are a popular approach in learning to rank \cite{radlinski08learning} and they are closely related to our work. The key characteristic of ranked bandits is that each position in the recommended list is an independent bandit problem, which is solved by some \emph{base bandit algorithm}. The solutions in ranked bandits are $(1 - 1 / e)$ approximate and the regret is $\Omega(K)$ \cite{radlinski08learning}, where $K$ is the number of recommended items. Cascading bandits can be viewed as a form of ranked bandits where each recommended item attracts the user independently. We propose novel algorithms for this setting that can learn the optimal solution and whose regret decreases with $K$. We compare one of our algorithms to ranked bandits in \cref{sec:experiments ranked bandits}.

Our learning problem is of a combinatorial nature, our objective is to learn $K$ most attractive items out of $L$. In this sense, our work is related to stochastic combinatorial bandits, which are often studied with linear rewards and semi-bandit feedback \cite{gai12combinatorial,kveton14matroid,kveton14learning,kveton15tight}. The key differences in our work are that the reward function is non-linear in unknown parameters; and that the feedback is less than semi-bandit, only a subset of the recommended items is observed.

Our reward function is non-linear in unknown parameters. These types of problems have been studied before in various contexts. \citet{filippi10parametric} proposed and analyzed a generalized linear bandit with bandit feedback. \citet{chen13combinatorial} studied a variant of stochastic combinatorial semi-bandits whose reward function is a known monotone function of a linear function in unknown parameters. \citet{le14sequential} studied a network optimization problem whose reward function is a non-linear function of observations.

\citet{bartok12adaptive} studied finite partial monitoring problems. This is a very general class of problems with finitely many actions, which are chosen by the learning agent; and finitely many outcomes, which are determined by the environment. The outcome is unobserved and must be inferred from the feedback of the environment. Cascading bandits can be viewed as finite partial monitoring problems where the actions are lists of $K$ items out of $L$ and the outcomes are the corners of a $L$-dimensional binary hypercube. \citet{bartok12adaptive} proposed an algorithm that can solve such problems. This algorithm is computationally inefficient in our problem because it needs to reason over all pairs of actions and stores vectors of length $2^L$. \citet{bartok12adaptive} also do not prove logarithmic distribution-dependent regret bounds as in our work.

\citet{agrawal89asymptotically} studied a partial monitoring problem with non-linear rewards. In this problem, the environment draws a state from a distribution that depends on the action of the learning agent and an unknown parameter. The form of this dependency is known. The state of the environment is observed and determines reward. The reward is a known function of the state and action. \citet{agrawal89asymptotically} also proposed an algorithm for their problem and proved a logarithmic distribution-dependent regret bound. Similarly to \citet{bartok12adaptive}, this algorithm is computationally inefficient in our setting.

\citet{lin14combinatorial} studied partial monitoring in combinatorial bandits. The setting of this work is different from ours. \citet{lin14combinatorial} assume that the feedback is a linear function of the weights of the items that is indexed by actions. Our feedback is a non-linear function of the weights of the items.

\citet{mannor11bandits} and \citet{caron12leveraging} studied an opposite setting to ours, where the learning agent observes a superset of chosen items. \citet{chen14combinatorial} studied this problem in stochastic combinatorial semi-bandits.

%!TEX root = Paper.tex

\section{Conclusions}
\label{sec:conclusions}

In this paper, we propose a learning variant of the cascade model \cite{craswell08experimental}, a popular model of user behavior in web search. We propose two algorithms for solving it, $\cascadeucb$ and $\cascadeklucb$, and prove gap-dependent upper bounds on their regret. Our analysis addresses two main challenges of our problem, a non-linear reward function and limited feedback. We evaluate our algorithms on several problems and show that they perform well even when our modeling assumptions are violated.

We leave open several questions of interest. For instance, we show in \cref{sec:experiments imperfect model} that  $\cascadeklucb$ can learn the optimal solution to the DBN model. This indicates that the DBN model is learnable in the bandit setting and we leave this for future work. Note that the regret in cascading bandits is $\Omega(L)$ (\cref{sec:lower bound}). Therefore, our learning framework is not practical when the number of items $L$ is large. Similarly to \citet{slivkins13ranked}, we plan to address this issue by embedding the items in some feature space, along the lines of  \citet{wen15efficient}. Finally, we want to generalize our results to more complex problems, such as learning routing paths in computer networks where the connections fail with unknown probabilities.

From the theoretical point of view, we would like to close the gap between our upper and lower bounds. In addition, we want to derive gap-free bounds. Finally, we would like to refine our analysis so that it explains that the reverse ordering of recommended items yields smaller regret.

\bibliographystyle{icml2015}
\bibliography{References}

%!TEX root = Paper.tex

\clearpage
\onecolumn
\appendix

\section{Proofs of Main Theorems}
\label{sec:proofs}

\subsection{Proof of \cref{thm:ucb1}}
\label{sec:proof ucb1}

Let $\rnd{R}_t = R(\rnd{A}_t, \rnd{w}_t)$ be the regret of the learning algorithm at time $t$, where $\rnd{A}_t$ is the recommended list at time $t$ and $\rnd{w}_t$ are the weights of items at time $t$. Let $\cE_t = \set{\exists e \in E \text{ s.t. } \abs{\bar{w}(e) - \hat{\rnd{w}}_{\rnd{T}_{t - 1}(e)}(e)} \geq c_{t - 1, \rnd{T}_{t - 1}(e)}}$ be the event that $\bar{w}(e)$ is not in the high-probability confidence interval around $\hat{\rnd{w}}_{\rnd{T}_{t - 1}(e)}(e)$ for some $e$ at time $t$; and let $\ccE_t$ be the complement of $\cE_t$, $\bar{w}(e)$ is in the high-probability confidence interval around $\hat{\rnd{w}}_{\rnd{T}_{t - 1}(e)}(e)$ for all $e$ at time $t$. Then we can decompose the regret of $\cascadeucb$ as:
\begin{align}
  R(n) =
  \EE{\sum_{t = 1}^n \I{\cE_t} \rnd{R}_t} +
  \EE{\sum_{t = 1}^n \I{\ccE_t} \rnd{R}_t}\,.
  \label{eq:good bad ucb1}
\end{align}
Now we bound both terms in the above regret decomposition.

The first term in \eqref{eq:good bad ucb1} is small because all of our confidence intervals hold with high probability.
In particular, Hoeffding's inequality \citep[Theorem 2.8]{boucheron13concentration} yields that for any $e$, $s$, and $t$:
\begin{align*}
  P(\abs{\bar{w}(e) - \hat{\rnd{w}}_s(e)} \geq c_{t, s}) \leq 2 \exp[-3 \log t]\,,
\end{align*}
and therefore:
\begin{align*}
  \EE{\sum_{t = 1}^n \I{\cE_t}} \leq
  \sum_{e \in E} \sum_{t = 1}^n \sum_{s = 1}^t P(\abs{\bar{w}(e) - \hat{\rnd{w}}_s(e)} \geq c_{t, s}) \leq
  2 \sum_{e \in E} \sum_{t = 1}^n \sum_{s = 1}^t \exp[-3 \log t] \leq
  2 \sum_{e \in E} \sum_{t = 1}^n t^{-2} \leq
  \frac{\pi^2}{3} L\,.
\end{align*}
Since $\rnd{R}_t \leq 1$, $\EE{\sum_{t = 1}^n \I{\cE_t} \rnd{R}_t} \leq \frac{\pi^2}{3} L$.

Recall that $\EEt{\cdot} = \condEE{\cdot}{\cH_t}$, where $\cH_t$ is the history of the learning agent up to choosing $\rnd{A}_t$, the first $t - 1$ observations and $t$ actions \eqref{eq:history}. Based on this definition, we rewrite the second term in \eqref{eq:good bad ucb1} as:
\begin{align*}
  \EE{\sum_{t = 1}^n \I{\ccE_t} \rnd{R}_t} \stackrel{\text{(a)}}{=}
  \sum_{t = 1}^n \EE{\I{\ccE_t} \EEt{\rnd{R}_t}} \stackrel{\text{(b)}}{\leq}
  \sum_{e = K + 1}^L \EE{\sum_{e^\ast = 1}^K \sum_{t = 1}^n \Delta_{e, e^\ast} \I{\ccE_t, G_{e, e^\ast, t}}}\,,
\end{align*}
where equality (a) is due to the tower rule and that $\I{\ccE_t}$ is only a function of $\cH_t$, and inequality (b) is due to the upper bound in \cref{thm:regret decomposition}.

Now we bound $\sum_{e^\ast = 1}^K \sum_{t = 1}^n \Delta_{e, e^\ast} \I{\ccE_t, G_{e, e^\ast, t}}$ for any suboptimal item $e$. Select any optimal item $e^\ast$. When event $\ccE_t$ happens, $\abs{\bar{w}(e) - \hat{\rnd{w}}_{\rnd{T}_{t - 1}(e)}(e)} < c_{t - 1, \rnd{T}_{t - 1}(e)}$. Moreover, when event $G_{e, e^\ast, t}$ happens, $\rnd{U}_t(e) \geq \rnd{U}_t(e^\ast)$ by \cref{thm:regret decomposition}. Therefore, when both $G_{e, e^\ast, t}$ and $\ccE_t$ happen:
\begin{align*}
  \bar{w}(e) + 2 c_{t - 1, \rnd{T}_{t - 1}(e)} \geq
  \rnd{U}_t(e) \geq
  \rnd{U}_t(e^\ast) \geq
  \bar{w}(e^\ast)\,,
\end{align*}
which implies:
\begin{align*}
  2 c_{t - 1, \rnd{T}_{t - 1}(e)} \geq \Delta_{e, e^\ast}\,.
\end{align*}
Together with $c_{n, \rnd{T}_{t - 1}(e)} \geq c_{t - 1, \rnd{T}_{t - 1}(e)}$, this implies $\rnd{T}_{t - 1}(e) \leq \tau_{e, e^\ast}$, where $\tau_{e, e^\ast} = \frac{6}{\Delta_{e, e^\ast}^2} \log n$. Therefore:
\begin{align}
  \sum_{e^\ast = 1}^K \sum_{t = 1}^n \Delta_{e, e^\ast} \I{\ccE_t, G_{e, e^\ast, t}} \leq
  \sum_{e^\ast = 1}^K \Delta_{e, e^\ast} \sum_{t = 1}^n
  \I{\rnd{T}_{t - 1}(e) \leq \tau_{e, e^\ast}, \ G_{e, e^\ast, t}}\,.
  \label{eq:peel me}
\end{align}
Let:
\begin{align*}
  \rnd{M}_{e, e^\ast} = \sum_{t = 1}^n \I{\rnd{T}_{t - 1}(e) \leq \tau_{e, e^\ast}, \ G_{e, e^\ast, t}}
\end{align*}
be the inner sum in \eqref{eq:peel me}. Now note that (i) the counter $\rnd{T}_{t - 1}(e)$ of item $e$ increases by one when the event $G_{e, e^\ast, t}$ happens for any optimal item $e^\ast$, (ii) the event $G_{e, e^\ast, t}$ happens for at most one optimal $e^\ast$ at any time $t$; and (iii) $\tau_{e, 1} \leq \ldots \leq \tau_{e, K}$. Based on these facts, it follows that $\rnd{M}_{e, e^\ast} \leq \tau_{e, e^\ast}$, and moreover $\sum_{e^\ast = 1}^K \rnd{M}_{e, e^\ast} \leq \tau_{e, K}$. Therefore, the right-hand side of \eqref{eq:peel me} can be bounded from above by:
\begin{align*}
  \max \left\{\sum_{e^\ast = 1}^K \Delta_{e, e^\ast} m_{e, e^\ast}:
  0 \leq m_{e, e^\ast} \leq \tau_{e, e^\ast}, \ \sum_{e^\ast = 1}^K m_{e, e^\ast} \leq \tau_{e, K}\right\}\,.
\end{align*}
Since the gaps are decreasing, $\Delta_{e, 1} \geq \ldots \geq \Delta_{e, K}$, the solution to the above problem is $m_{e, 1}^\ast = \tau_{e, 1}$, $m_{e, 2}^\ast = \tau_{e, 2} - \tau_{e, 1}$, $\dots$, $m_{e, K}^\ast = \tau_{e, K} - \tau_{e, K - 1}$. Therefore, the value of \eqref{eq:peel me} is bounded from above by:
\begin{align*}
  \left[\Delta_{e, 1} \frac{1}{\Delta_{e, 1}^2} + \sum_{e^\ast = 2}^K \Delta_{e, e^\ast}
  \left(\frac{1}{\Delta_{e, e^\ast}^2} - \frac{1}{\Delta_{e, e^\ast - 1}^2}\right)\right] 6 \log n\,.
\end{align*}
By Lemma 3 of \citet{kveton14matroid}, the above term is bounded by $\frac{12}{\Delta_{e, K}} \log n$. Finally, we chain all inequalities and sum over all suboptimal items $e$.

\subsection{Proof of \cref{thm:klucb}}
\label{sec:proof klucb}

Let $\rnd{R}_t = R(\rnd{A}_t, \rnd{w}_t)$ be the regret of the learning algorithm at time $t$, where $\rnd{A}_t$ is the recommended list at time $t$ and $\rnd{w}_t$ are the weights of items at time $t$. Let $\cE_t = \set{\exists 1 \leq e \leq K \text{ s.t. } \bar{w}(e) > \rnd{U}_t(e)}$ be the event that the attraction probability of at least one optimal item is above its upper confidence bound at time $t$. Let $\ccE_t$ be the complement of event $\cE_t$. Then we can decompose the regret of $\cascadeklucb$ as:
\begin{align}
  R(n) =
  \EE{\sum_{t = 1}^n \I{\cE_t} \rnd{R}_t} +
  \EE{\sum_{t = 1}^n \I{\ccE_t} \rnd{R}_t}\,.
  \label{eq:good bad klucb}
\end{align}
By Theorems 2 and 10 of \citet{garivier11klucb}, thanks to the choice of the upper confidence bound $\rnd{U}_t$, the first term in \eqref{eq:good bad klucb} is bounded as $\EE{\sum_{t = 1}^n \I{\cE_t} \rnd{R}_t} \leq 7 K \log \log n$. As in the proof of \cref{thm:ucb1}, we rewrite the second term as:
\begin{align*}
  \EE{\sum_{t = 1}^n \I{\ccE_t} \rnd{R}_t} =
  \sum_{t = 1}^n \EE{\I{\ccE_t} \EEt{\rnd{R}_t}} \leq
  \sum_{e = K + 1}^L \EE{\sum_{e^\ast = 1}^K \sum_{t = 1}^n \Delta_{e, e^\ast} \I{\ccE_t, G_{e, e^\ast, t}}}\,.
\end{align*}
Now note that for any suboptimal item $e$ and $\tau_{e, e^\ast} > 0$:
\begin{align}
  \EE{\sum_{e^\ast = 1}^K \sum_{t = 1}^n \Delta_{e, e^\ast} \I{\ccE_t, G_{e, e^\ast, t}}}
  & \leq \EE{\sum_{e^\ast = 1}^K \sum_{t = 1}^n
  \Delta_{e, e^\ast} \I{\rnd{T}_{t - 1}(e) \leq \tau_{e, e^\ast}, \ G_{e, e^\ast, t}}} + {} \label{eq:pull split} \\
  & \phantom{{} = {}} \sum_{e^\ast = 1}^K \Delta_{e, e^\ast}
  \EE{\sum_{t = 1}^n \I{\rnd{T}_{t - 1}(e) > \tau_{e, e^\ast}, \ \ccE_t, \ G_{e, e^\ast, t}}}\,. \nonumber
\end{align}
Let:
\begin{align*}
  \tau_{e, e^\ast} = \frac{1 + \eps}{\kl{\bar{w}(e)}{\bar{w}(e^\ast)}}(\log n + 3 \log \log n)\,.
\end{align*}
Then by the same argument as in Theorem 2 and Lemma 8 of \citet{garivier11klucb}:
\begin{align*}
  \EE{\sum_{t = 1}^n \I{\rnd{T}_{t - 1}(e) > \tau_{e, e^\ast}, \ \ccE_t, \ G_{e, e^\ast, t}}} \leq
  \frac{C_2(\eps)}{n^{\beta(\eps)}}
\end{align*}
holds for any suboptimal $e$ and optimal $e^\ast$. So the second term in \eqref{eq:pull split} is bounded from above by $K \frac{C_2(\eps)}{n^{\beta(\eps)}}$. Now we bound the first term in \eqref{eq:pull split}. By the same argument as in the proof of \cref{thm:ucb1}:
\begin{align*}
  & \sum_{e^\ast = 1}^K \sum_{t = 1}^n
  \Delta_{e, e^\ast} \I{\rnd{T}_{t - 1}(e) \leq \tau_{e, e^\ast}, \ G_{e, e^\ast, t}} \leq \\
  & \quad \left[\frac{\Delta_{e, 1}}{\kl{\bar{w}(e)}{\bar{w}(1)}} + \sum_{e^\ast = 2}^K \Delta_{e, e^\ast}
  \left(\frac{1}{\kl{\bar{w}(e)}{\bar{w}(e^\ast)}} - \frac{1}{\kl{\bar{w}(e)}{\bar{w}(e^\ast - 1)}}\right)\right]
  (1 + \eps) (\log n + 3 \log \log n)
\end{align*}
holds for any suboptimal item $e$. By \cref{lem:klucb peeling}, the leading constant is bounded as:
\begin{align*}
  \frac{\Delta_{e, 1}}{\kl{\bar{w}(e)}{\bar{w}(1)}} + \sum_{e^\ast = 2}^K \Delta_{e, e^\ast}
  \left(\frac{1}{\kl{\bar{w}(e)}{\bar{w}(e^\ast)}} - \frac{1}{\kl{\bar{w}(e)}{\bar{w}(e^\ast - 1)}}\right) \leq
  \frac{\Delta_{e, K} (1 + \log(1 / \Delta_{e, K}))}{\kl{\bar{w}(e)}{\bar{w}(K)}}\,.
\end{align*}
Finally, we chain all inequalities and sum over all suboptimal items $e$.

\section{Technical Lemmas}
\label{sec:lemmas}

\newtheorem*{lem:modular decomposition}{\cref{lem:modular decomposition}}
\begin{lem:modular decomposition}
Let $A = (a_1, \dots, a_K)$ and $B = (b_1, \dots, b_K)$ be any two lists of $K$ items from $\Pi_K(E)$ such that $a_i = b_j$ only if $i = j$. Let $\rnd{w} \sim P$ in \cref{ass:independence}. Then:
\begin{align*}
  \EE{\prod_{k = 1}^K \rnd{w}(a_k) - \prod_{k = 1}^K \rnd{w}(b_k)} =
  \sum_{k = 1}^K \EE{\prod_{i = 1}^{k - 1} \rnd{w}(a_i)}
  \EE{\rnd{w}(a_k) - \rnd{w}(b_k)} \left(\prod_{j = k + 1}^K \EE{\rnd{w}(b_j)}\right)\,.
\end{align*}
\end{lem:modular decomposition}
\begin{proof}
First, we prove that:
\begin{align*}
  \prod_{k = 1}^K w(a_k) - \prod_{k = 1}^K w(b_k) =
  \sum_{k = 1}^K \left(\prod_{i = 1}^{k - 1} w(a_i)\right) (w(a_k) - w(b_k)) \left(\prod_{j = k + 1}^K w(b_j)\right)
\end{align*}
holds for any $w \in \set{0, 1}^L$. The proof is by induction on $K$. The claim holds obviously for $K = 1$. Now suppose that the claim holds for any $A, B \in \Pi_{K - 1}(E)$. Let $A, B \in \Pi_K(E)$. Then:
\begin{align*}
  \prod_{k = 1}^K w(a_k) - \prod_{k = 1}^K w(b_k)
  & = \prod_{k = 1}^K w(a_k) - w(b_K) \prod_{k = 1}^{K - 1} w(a_k) +
  w(b_K) \prod_{k = 1}^{K - 1} w(a_k) - \prod_{k = 1}^K w(b_k) \\
  & = (w(a_K) - w(b_K)) \prod_{k = 1}^{K - 1} w(a_k) +
  w(b_K) \left[\prod_{k = 1}^{K - 1} w(a_k) - \prod_{k = 1}^{K - 1} w(b_k)\right] \\
  & = (w(a_K) - w(b_K)) \prod_{k = 1}^{K - 1} w(a_k) +
  \sum_{k = 1}^{K - 1} \left(\prod_{i = 1}^{k - 1} w(a_i)\right)
  (w(a_k) - w(b_k)) \left(\prod_{j = k + 1}^K w(b_j)\right) \\
  & = \sum_{k = 1}^K \left(\prod_{i = 1}^{k - 1} w(a_i)\right)
  (w(a_k) - w(b_k)) \left(\prod_{j = k + 1}^K w(b_j)\right)\,.
\end{align*}
The third equality is by our induction hypothesis. Finally, note that $\rnd{w}$ is drawn from a factored distribution. Therefore, we can decompose the expectation of the product as a product of expectations, and our claim follows.
\end{proof}

\begin{lemma}
\label{lem:klucb peeling} Let $p_1 \geq \ldots \geq p_K > p$ be $K + 1$ probabilities and $\Delta_k = p_k - p$ for $1 \leq k \leq K$. Then:
\begin{align*}
  \frac{\Delta_1}{\kl{p}{p_1}} + \sum_{k = 2}^K \Delta_k
  \left(\frac{1}{\kl{p}{p_k}} - \frac{1}{\kl{p}{p_{k - 1}}}\right) \leq
  \frac{\Delta_K (1 + \log(1 / \Delta_K))}{\kl{p}{p_K}}\,.
\end{align*}
\end{lemma}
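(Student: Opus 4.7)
I will start by rewriting the left-hand side with a summation-by-parts step. Setting $a_k = 1/\kl{p}{p_k}$, the expression collapses to
\begin{align*}
\frac{\Delta_1}{\kl{p}{p_1}} + \sum_{k=2}^K \Delta_k\bigl(a_k - a_{k-1}\bigr)
= \Delta_K a_K + \sum_{k=1}^{K-1} (\Delta_k - \Delta_{k+1})\,a_k,
\end{align*}
so the leading term already accounts for the ``$1$'' in the target bound $\Delta_K(1+\log(1/\Delta_K))/\kl{p}{p_K}$, and the task reduces to showing that the residual sum is at most $\Delta_K a_K \log(1/\Delta_K)$.

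The crux of the proof is the monotonicity of $q \mapsto \kl{p}{q}/(q-p)$ on $(p,1)$. Since $\kl{p}{\cdot}$ is convex and vanishes at $q = p$, the ratio equals the average of its derivative over $[p,q]$ and is therefore non-decreasing in $q$. Applied to $q = p_k \geq p_K$ this yields $a_k \leq (\Delta_K/\Delta_k)\,a_K$ for every $k \leq K-1$, which is exactly the pointwise comparison needed to pull $a_K$ out of the residual sum. Substituting this bound and factoring out $\Delta_K a_K$ reduces the problem to proving $\sum_{k=1}^{K-1}(\Delta_k - \Delta_{k+1})/\Delta_k \leq \log(1/\Delta_K)$.

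The remaining scalar inequality follows from the elementary bound $1 - x \leq \log(1/x)$ for $x > 0$: writing $(\Delta_k - \Delta_{k+1})/\Delta_k = 1 - \Delta_{k+1}/\Delta_k \leq \log(\Delta_k/\Delta_{k+1})$ makes the sum telescope to $\log(\Delta_1/\Delta_K) \leq \log(1/\Delta_K)$, where the last step uses $\Delta_1 \leq 1$. Combining with the isolated $\Delta_K a_K$ term then yields the claim. The only conceptual step is choosing the right comparison: a naive Pinsker-type estimate on $1/\kl{p}{p_k}$ would give only an $O(1/\Delta_K)$ bound on the residual, which can be much larger than $\Delta_K\log(1/\Delta_K)/\kl{p}{p_K}$ since $\kl{p}{p_K}$ may be as large as $\Delta_K^2/[p(1-p)]$; working with $\kl{p}{q}/(q-p)$ rather than with raw $\kl{p}{q}$ is what makes the constants match. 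Everything else is either a telescoping rearrangement or the standard inequality $\log x \leq x - 1$, so I expect this monotonicity observation to be the only non-mechanical ingredient.
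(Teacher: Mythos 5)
Your proof is correct and follows essentially the same route as the paper's: the same Abel summation rewrites the left-hand side as $\Delta_K/\kl{p}{p_K} + \sum_{k=1}^{K-1}(\Delta_k-\Delta_{k+1})/\kl{p}{p_k}$, and your chord-slope monotonicity of $q \mapsto \kl{p}{q}/(q-p)$ is exactly the paper's convexity bound $\kl{p}{p+x} \geq x\,\kl{p}{p+\Delta_K}/\Delta_K$. The only cosmetic difference is that you finish with the discrete telescoping inequality $1 - \Delta_{k+1}/\Delta_k \leq \log(\Delta_k/\Delta_{k+1})$ where the paper compares the sum to $\int_{\Delta_K}^{1} x^{-1}\ud x$; both yield the same $\log(1/\Delta_K)$ factor.
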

\begin{proof}
First, we note that:
\begin{align*}
  \frac{\Delta_1}{\kl{p}{p_1}} + \sum_{k = 2}^K \Delta_k
  \left(\frac{1}{\kl{p}{p_k}} - \frac{1}{\kl{p}{p_{k - 1}}}\right) =
  \sum_{k = 1}^{K - 1} \frac{\Delta_k - \Delta_{k + 1}}{\kl{p}{p_k}} + \frac{\Delta_K}{\kl{p}{p_K}}\,.
\end{align*}
The summation over $k$ can be bounded from above by a definite integral:
\begin{align*}
  \sum_{k = 1}^{K - 1} \frac{\Delta_k - \Delta_{k + 1}}{\kl{p}{p_k}} =
  \sum_{k = 1}^{K - 1} \frac{\Delta_k - \Delta_{k + 1}}{\kl{p}{p + \Delta_k}} \leq
  \int_{\Delta_K}^{\Delta_1} \frac{1}{\kl{p}{p + x}} \ud x \leq
  \int_{\Delta_K}^1 \frac{1}{\kl{p}{p + x}} \ud x\,,
\end{align*}
where the first inequality follows from the fact that $1 / \kl{p}{p + x}$ decreases on $x \geq 0$. To the best of our knowledge, the integral of $1 / \kl{p}{p + x}$ over $x$ does not have a simple analytic solution. Therefore, we integrate an upper bound on $1 / \kl{p}{p + x}$ which does. In particular, note that for any $x \geq \Delta_K$:
\begin{align*}
  \kl{p}{p + x} \geq \frac{\kl{p}{p + \Delta_K}}{\Delta_K} x = \frac{\kl{p}{p_K}}{\Delta_K} x
\end{align*}
because $\kl{p}{p + x}$ is convex, increasing in $x \geq 0$, and its minimum is attained at $x = 0$. Therefore:
\begin{align*}
  \int_{\Delta_K}^1 \frac{1}{\kl{p}{p + x}} \ud x \leq
  \frac{\Delta_K}{\kl{p}{p_K}} \int_{\Delta_K}^1 \frac{1}{x} \ud x =
  \frac{\Delta_K}{\kl{p}{p_K}} \log(1 / \Delta_K)\,.
\end{align*}
Finally, we chain all inequalities and get the final result.
\end{proof}

\end{document}